\newtheorem{theorem}{Theorem}
\newtheorem{lemma}[theorem]{Lemma}
\newtheorem{proposition}[theorem]{Proposition}
\newtheorem{remark}[theorem]{Remark}
\numberwithin{theorem}{section}
\newenvironment{proof}[1][Proof]{\textbf{#1.} }{\ \rule{0.5em}{0.5em}}
\def\be{\begin{equation}}
\def\ee{\end{equation}}
\def\bea{\begin{eqnarray}}
\def\eea{\end{eqnarray}}
\def\ba{\begin{array}}
\def\ea{\end{array}}
\def\bp{\mathbf{p}}
\def\bq{\mathbf{q}}
\def\bx{\mathbf{x}}
\newcommand{\rem}[1]{}
\newcommand{\de}{\delta}
\newcommand{\bu}{\boldsymbol{u}}
\newcommand{\bv}{\boldsymbol{v}}
\newcommand{\balpha}{\boldsymbol{\alpha}}
\newcommand{\bbeta}{\boldsymbol{\beta}}
\newcommand{\bxi}{\boldsymbol{\xi}}
\newcommand{\bmu}{\boldsymbol{\mu}}
\newcommand{\bnu}{\boldsymbol{\nu}}
\newcommand{\pp}[2]{\frac{\partial #1}{\partial #2}}
\newcommand{\mse}{\mathfrak{se}}
\newcommand{\mso}{\mathfrak{so}}
\newcommand{\todo}[1]{\vspace{5 mm}\par \noindent
\framebox{\begin{minipage}[c]{0.95 \textwidth}
\tt #1 \end{minipage}}\vspace{5 mm}\par}
\begin{document}
\begin{frontmatter}

\title{CLPNets: Coupled  Lie-Poisson Neural Networks for Multi-Part Hamiltonian Systems with Symmetries}

\author[1]{Christopher Eldred}
\ead{celdred@sandia.gov}
\author[2]{Fran\c{c}ois Gay-Balmaz}
\ead{francois.gb@ntu.edu.sg}
\author[3]{Vakhtang Putkaradze\corref{cor1}}
\cortext[cor1]{Corresponding author}
\ead{putkarad@ualberta.ca}

\affiliation[1]{organization = {Computer Science Research Institute}, 
addressline={Sandia National Laboratory, 1450 Innovation Pkwy SE}, city={Albuquerque}, state = {NM}, postcode={87123}, country={USA}}

\affiliation[2]{organization = {Division of Mathematical Sciences}, 
addressline={Nanyang Technological University}, city={637371}, country={Singapore}}

\affiliation[3]{organization={Department of Mathematical and Statistical Sciences},
            addressline={University of Alberta}, 
            city={Edmonton},
            postcode={T6G 2G1}, 
            state={Alberta},
            country={Canada}}
\date{\today}
\begin{abstract} 
To accurately compute data-based prediction of Hamiltonian systems, especially the long-term evolution of such systems, it is essential to utilize methods that preserve the structure of the equations over time. In this paper, we consider a case that is particularly challenging for data-based methods: systems with interacting parts that do not reduce to pure momentum evolution. Such systems are essential in scientific computations. For example, any discretization of a continuum elastic rod can be viewed as interacting elements that can move and rotate in space, with each discrete element moving on the configuration manifold, which can be viewed as the group of rotations and translations $SE(3)$. 
For an isolated elastic rod, the system is invariant with respect to the same group of rotations and translations. For individual elastic elements, due to this symmetry, the equations can be written as Lie-Poisson equations for the momenta, which are vector variables. For several elastic elements, the evolution involves not only the momenta but also the relative positions and orientations of the particles. Although such a system can technically be written in a way similar to a Lie-Poisson system with a Poisson bracket, the presence of Lie group-valued elements, such as relative positions and orientations, poses a problem for applying previously derived methods for data-based computing.

In this paper, we develop a novel method of data-based computation and complete phase space learning of such systems. We follow the original framework of \emph{SympNets} \citep{jin2020sympnets}, building the neural network from phase space mappings that are canonical, and transformations that preserve the Lie-Poisson structure (\emph{LPNets}) as in \citep{eldred2024lie}. To find adequate neural networks for Poisson systems including relative orientations, we derive a novel system of mappings that are built into neural networks describing the evolution of such systems. We call such networks Coupled Lie-Poisson Neural Networks, or \emph{CLPNets}. We consider increasingly complex examples for the applications of CLPNets, starting with the rotation of two rigid bodies about a common axis, progressing to the free rotation of two rigid bodies, and finally to the evolution of two connected and interacting $SE(3)$ components, describing the discretization of an elastic rod into two elements. Our method preserves all Casimir invariants of each system to machine precision, irrespective of the quality of the training data, and preserves energy to high accuracy. Our method also shows good resistance to the curse of dimensionality, requiring only a few thousand data points for all cases studied, with the effective dimension varying from three in the simplest cases to eighteen in the most complex. Additionally, the method is highly economical in memory requirements, requiring only about 200 parameters for the most complex case considered. 

\end{abstract}
\begin{keyword}
Neural equations, Data-based modeling, Long-term evolution, Hamiltonian Systems, Poisson brackets
\end{keyword}

\end{frontmatter}

\section{Introduction}
\subsection{Physical relevance and importance for applications}

Many physical systems contain several interacting parts of a similar nature. Individual parts of these systems possess their own dynamics and simultaneously contribute to the combined dynamics of the system through their interaction with other parts. Perhaps the most apparent examples of such systems are provided by the discretization of continuum mechanics, for example, elastic rods in the Lagrangian setting. A discretization based on either finite elements or a lattice will lead to a finite number of interacting elastic elements. These elements have their own linear and angular momenta, and the elastic energy depends on the \emph{relative} position and orientation of these elements, with interactions limited to the nearest neighbors. Thus, understanding how to solve the combined dynamics of coupled systems is essential for practical computations in continuum mechanics.
Besides the discretization of continuum elastic systems, other examples include systems modeling macromolecular conformations, such as those described by \citep{banavar2007structural}, which use finite-size elements. Applications of this type abound in robotics, especially in soft robots and manipulators \citep{chirikjian1994hyper,mengaldo2022concise}. The goal of this paper is to develop data-based, structure-preserving computing methods for such systems. Other physical examples of these kinds of systems include the dynamics of tree-like structures   \citep{amirouche1986dynamic,ider1989influence}, see also \citep{nieuwenhuis2018dynamics} for applications of tree-like structures to energy harvesting. Another application is the dynamics of polymers that may have nonlocal interactions through, for example, electrostatic charges  \citep{ellis2010symmetry}, and may have non-trivial dendronized topology in their structure  \citep{gay2012exact}.

In order to compute the evolution of these systems in a traditional way, one needs to assume an exact formula for the Hamiltonian of the system. In some cases, such a formula may be highly complex, involving many parameters and even having an unclear functional form.
That ambiguity is especially pronounced in the elasticity arguments related to biological materials, as numerous suggestions for elastic energy have been proposed. This is evident, for instance, in the elastic energy components of brain tissue mechanics \citep{jiang2015measuring,de2016constitutive,mihai2017family}. In addition to the inherent complexity of elasticity, the dynamics of brain tissue are believed to include viscoelastic components \citep{calhoun2019beyond}. Thus, for cases where the exact form of mathematical models is unknown, it is advantageous to develop data-based modeling methods. These methods circumvent the complexity of constructing exact models for simulations and only require the general form of these mathematical models, such as the variables they may depend on or the symmetries they may obey. 

We thus develop data-based numerical methods that fit into the framework of Physics-Informed Neural Networks (PINNs) for these systems. In this paper, we focus on developing computational methods applicable to Hamiltonian systems where friction can be neglected. As we describe below, traditional PINNs are difficult to apply directly to Hamiltonian systems, so we seek structure-preserving methods capable of accurately describing the systems' long-term dynamics. We begin with a review of the current literature on data-based computation of Hamiltonian systems.

\subsection{Relevant work related to data-based computing of Poisson systems}

Purely data-based methods rooted on Machine Learning (ML) have been successful in interpreting large amounts of unstructured data. However, applying ML methods directly to big data for scientific and industrial purposes, without a deep understanding of the underlying engineering and physics, presents challenges. To tackle these issues,  \textit{Physics Informed Neural Networks (PINNs)} have been developed \citep{raissi2019physics}. Given the breadth of PINNs literature, a comprehensive overview is beyond the scope here; interested readers are referred to recent reviews \citep{karniadakis2021physics,cuomo2022scientific,kim2024review} for references, literature reviews, and discussions on the advantages and disadvantages of these methods. In this approach, the system's evolution over time is assumed to be described by a quantity $ \mathbf{u} (t)$ belonging to some phase space, and the system's motion $ \mathbf{u} (t)$ is approximated by a governing equation: 
\begin{equation} 
\dot{\mathbf{u} } =\mathbf{f} ( \mathbf{u} ,t)\,.
\label{general_eq}
\end{equation} 
Here, $ \mathbf{u} $ and $\mathbf{f} $ can be either finite-dimensional, forming a system of ODEs, or infinite-dimensional, forming a system of PDEs where they depend, for example, on additional spatial variables $\bx$ and derivatives with respect to these variables. In the regular PINNs approach, the initial conditions, boundary conditions, and the ODEs themselves form part of the loss function. Subsequently, a neural network is designed to take the independent variable time $t$ (and perhaps the independent spatial variables $\bx$) and predict the unknown function $ \mathbf{u} (t)$ representing the solution. Due to the structure of neural networks, for a known set of weights, the derivatives of the solution with respect to the independent variables, and thus the function representing the equation \eqref{general_eq}, $\dot{ \mathbf{u} } - \mathbf{f}( \mathbf{u},t)$, can be computed analytically at given points, as well as the value of the function $ \mathbf{u} (t)$ at specified points (for example, initial conditions) and, if necessary, boundary conditions. For a given set of weights and activating functions in a neural network, the \emph{exact} values of temporal (and, if necessary, spatial) derivatives are computed using the method of automatic differentiation \citep{baydin2018automatic} to machine precision. Consequently, the loss can be computed as a function of the neural network's weights and optimized during the learning procedure. PINNs offer computational efficiency, with reported speedups exceeding 10,000 times 
for evaluations of solutions in complex systems like weather \citep{bi2023accurate,pathak2022fourcastnet}, accompanied by a drastic reduction of the energy consumption needed for computation.   PINNs are thus highly valuable in practical applications and have been implemented as open-source software in \emph{Python} and \emph{Julia}. Readers can easily find numerous open-source packages for implementing PINNs on GitHub, developed as individual or group projects, as well as products from large industry efforts, such as Nvidia's \emph{Modulus}.

In spite of their successes, there are still many uncertainties in the applications of PINNs. For example, PINNs may struggle with systems that have dynamics with widely different time scales \citep{karniadakis2021physics}, although numerous methods of solutions for this problem have been suggested  \cite{kim2024review}. Challenges have been identified for PDEs featuring strong advection terms \citep{krishnapriyan2021characterizing}. Systems with very little friction, such as Hamiltonian systems, are challenging to simulate using standard PINNs (as well as traditional numerical methods like Runge-Kutta), especially over the long term. An illustrative example of these difficulties can be seen in attempting to solve Kepler’s problem, \emph{i.e.}, the motion of a planet around the Sun. A typical simulation based on a numerical non-structure preserving algorithm such as Runge-Kutta will result in the planet slowly spiraling into the Sun, whereas a naive application of `vanilla' PINNs may lead to a trajectory quickly deviating from the well-known exact solution, which is an ellipse.

\subsection{Canonical and non-canonical Hamiltonian systems}

\paragraph{Canonical Hamiltonian Systems} There have been extensive studies on the application of physics-informed data methods to compute Hamiltonian systems. Most previous work has focused on \emph{canonical} Hamiltonian systems, where the system's motion law (equation \eqref{general_eq}) exhibits a specific structure: $ \mathbf{u} $ is $2n$-dimensional, represented as $ \mathbf{u} =(\bq, \bp)$, and there is a function $H(\bq,\bp)$, the Hamiltonian, such that $\mathbf{f} $ in \eqref{general_eq} becomes 
\begin{equation}
\mathbf{f}  = \mathbb{J} \nabla_{ \mathbf{u} } H \, , \quad  
\mathbb{J} = 
\left( 
\begin{array}{cc}
0 & \mathbb{I}_n  
\\
- \mathbb{I}_n & 0
\end{array}
\right) \,,
\label{canonical_system_gen}
\end{equation}
with $\mathbb{I}_n$ the $n \times n$ identity matrix, 
leading to the \emph{canonical Hamilton equations} for $(\bq, \bp)$: 
\begin{equation}
\dot \bq =  \pp{H}{\bp} \, , \quad 
\dot \bp = - \pp{H}{\bq} \,.
\label{canonical_system}
\end{equation}
If we take an arbitrary function $F(\bq,\bp)$ and compute its rate of change when $(\bq(t),\bp(t))$ satisfy \eqref{canonical_system}, we find that this rate of change is described by the \textit{canonical Poisson bracket}: 
\begin{equation}
\frac{dF}{dt} = \left\{ F, H \right\} = 
\pp{F}{\bq}\cdot \pp{H}{\bp}
- 
\pp{H}{\bq}\cdot \pp{F}{\bp}.
\label{canonical_bracket}
\end{equation}
The bracket defined by \eqref{canonical_bracket} is a mapping sending two arbitrary smooth functions $F(\bq,\bp)$ and $H(\bq,\bp)$ into a smooth function of the same variables. This mapping is bilinear, antisymmetric, behaves as a derivative on both functions, and satisfies the Jacobi identity: for all functions $F,G,H$
\begin{equation}
\{ \{ F, G \}, H \} + \{ \{ H, F \}, G \} + 
\{ \{ G, H \}, F \} =0 \, . 
\label{Jacobi_identity}
\end{equation}

\paragraph{General Poisson brackets}
Brackets that satisfy all the required properties, \emph{i.e.}, they are bilinear, antisymmetric, act as a derivation, and satisfy Jacobi identity \eqref{Jacobi_identity}, but are not described by the canonical equations \eqref{canonical_bracket}, are called general Poisson brackets (also known as \textit{non-canonical} Poisson brackets). The corresponding equations of motion are referred to as non-canonical Hamiltonian (or Poisson) systems. Often, these brackets have a non-trivial null space leading to the conservation of certain quantities known as Casimir constants, or simply  \emph{Casimirs}. In addition, the motion preserves specific invariant subsets, called \textit{symplectic leaves}. The Casimirs and symplectic leaves are properties of the Poisson bracket and are independent of a particular realization of a given Hamiltonian.

\paragraph{First approach: discovering the Hamiltonian system}
There is an avenue of thought that focuses on learning the actual Hamiltonian for the system directly from data. 
This approach was first implemented for canonical Hamiltonian systems in \citep{greydanus2019hamiltonian} under the name of \textit{Hamiltonian Neural Networks (HNN)}, which approximates the Hamiltonian function $H(\bq,\bp)$ by fitting evolution of specific data sequence through the canonical equations described in \eqref{canonical_system}, although without explicitly computing the Hamiltonian. This work was further extended to include adaptive learning of parameters and transitions to chaos \citep{han2021adaptable}. The mathematical foundation ensuring the existence of the sought Hamiltonian function in HNN was derived in \citep{david2021symplectic}. 

In an alternative approach, \citep{cranmer2020lagrangian} utilize the fact that canonical Hamiltonian systems are obtained from Lagrangian systems using the Legendre transform from velocity $\dot{\bq}$ to momentum $\bp$ variables \cite{arnol2013mathematical}. This particular method is known as \textit{Lagrangian Neural Networks (LNNs)}, which approximates solutions to the Euler-Lagrange equations for some Lagrangian function $L(\bq, \dot{\bq})$ instead of the Hamiltonian function $H(\bq, \bp)$. More generally, the idea of learning a vector field for non-canonical Poisson brackets was suggested in \citep{vsipka2023direct}. The primary challenge in this approach was enforcing the Jacobi identity \eqref{Jacobi_identity} for the learned equation structure, which is non-trivial and cannot be done exactly for high-dimensional systems.

In these and other works on the topic, one learns the vector field governing the system, assuming that the vector field can be solved using appropriate numerical methods. However, one needs to be aware that care must be taken in computing the numerical solutions for Hamiltonian systems, especially over long time spans, as conventional numerical methods lead to distortion of conserved quantities such as total energy and, when applicable, momenta, as illustrated by the example of Kepler’s problem mentioned earlier. 
For computing the long-term evolution of systems governed by Hamiltonian equations (canonical or non-canonical), whether the equations are exact or approximated by the Hamiltonians derived from neural networks, one can use variational integrator methods \citep{marsden2001discrete,leok2012general,hall2015spectral} that conserve momenta-like quantities with machine precision. However, these integrators may be substantially more computationally intensive compared to non-structure preserving methods. 

\paragraph{Second approach: Learning the structure-preserving mappings of phase space for canonical systems}
In this paper, we focus on an alternative approach, namely, exploring the learning transformations in phase space that satisfy appropriate properties. Recall that if $\boldsymbol{\phi}( \mathbf{u} )$ is a map in the phase space of equation \eqref{canonical_system} with $ \mathbf{u}  = (\bq, \bp)$, then this  map is called \textit{symplectic} if 
\begin{equation}
\left( \pp{\boldsymbol{\phi}}{ \mathbf{u} } \right)^\mathsf{T} 
\mathbb{J}
\left( \pp{\boldsymbol{\phi}}{ \mathbf{u} } \right) = 
\mathbb{J} \, . 
\label{symplectic_map_def}
\end{equation}
A well-known result of Poincar\'e states that, for \emph{any} Hamiltonian, the flow $\boldsymbol{\phi}_t( \mathbf{u} )$ of the canonical Hamiltonian system \eqref{canonical_system}, which maps initial conditions $ \mathbf{u} $ to the solution at time $t$, is a symplectic map \citep{arnol2013mathematical,MaRa2013}. Several authors have pursued the idea of directly seeking symplectic mappings derived from data, rather than identifying the actual equations of canonical systems and subsequently solving them. In this framework, data are provided at discrete points, and predictions are made only at these discrete points. Specifically, if the initial system has data specified at $t_n = (0, h, 2 h, \ldots, N h)$ for some fixed $h$, we are solely interested in predicting the system's state at these discrete times, rather than at all times as in the first approach. Confining the system to be essentially discrete circumvents the challenges associated with using structure-preserving integrators to simulate the system, unlike the first approach where equations with a specific structure must be discovered. Additionally, this method is normally faster than numerically solving the equations associated with the learned Hamiltonian system using a structure-preserving integrator. 
However, this speed comes with a compromise: the discrete methods do not guarantee that the solution approximates the true equation between the chosen time points, such as at $t = (n+0.5)h$ for some integer $n$ in the previous example, but only at the data points themselves $t_n = n h$.

The paper \citep{chen2020symplectic} introduced \emph{Symplectic Recurring Neural Networks (SRNNs)} which approximate the symplectic transformation from data using formulas derived from symplectic numerical methods. An alternative approach for computing canonical Hamiltonian equations for non-separable Hamiltonians was presented in \citep{xiong2020nonseparable}, where they developed approximations to symplectic steps based on a symplectic integrator introduced by \citep{tao2016explicit}. This approach was termed \textit{Non-Separable Symplectic Neural Networks (NSSNNs)}.

A more explicit computation of symplectic mappings was undertaken in \citep{jin2020sympnets,chen2021data}. 
The first approach introduced in \citep{jin2020sympnets} computes dynamics by composing symplectic maps of a specific functional form with parameters, where each map maintains its symplectic property across arbitrary parameter values; this method was named \emph{SympNets}. The error analysis in that work focuses on proving local approximation error. Another approach \citep{chen2021data} derives mappings directly using a generating function approach for canonical transformations, implemented as \textit{Generating Function Neural Networks (GFNNs)}. In contrast to SympNets, GFNNs provide an explicit estimate of error in long-term simulations. Finally, \cite{burby2020fast} developed \emph{H\'enonNets}, Neural Networks based on H\'enon mappings, capable of accurately learning Poincar\'e maps of Hamiltonian systems while preserving the symplectic structure.  SympNets,  GFNNs, and H\'enonNets demonstrate the ability to accurately simulate the long-term behavior of simple integrable systems such as pendulums or solve challenging problems like Kepler’s problem, which were difficult for the 'vanilla' PINNs. These methods also achieve satisfactory long-term accuracy in simulating chaotic systems like the three-body plane problem. Thus, learning symplectic transformations directly from data shows great promise for long-term simulations of canonical Hamiltonian systems. 

\paragraph{Learning structure-preserving mappings for non-canonical Hamiltonian systems} 
The paper \citep{jin2022learning} built upon SympNets, extending its application to non-canonical Poisson systems. The method used the Lie-Darboux theorem which states that it is possible to transform the non-canonical Hamiltonian (Poisson) equations to \emph{local} canonical coordinates. This method was named \emph{Poisson Neural Networks (PNNs)}. The PNNs then find such a Lie-Darboux transformation using an auto-encoder and use SympNets to provide an accurate approximation of the dynamics. While this method can in principle treat any Poisson system by learning the transformation to the canonical variables and its inverse, we shall note that there are several difficulties associated with this approach. The most important limitation of this method is that the Lie-Darboux transformation is only guaranteed to be local. In fact, in many examples, such as the rotation of a rigid body (evolution on $SO(3)$), it can shown that there is no global  Lie-Darboux mapping. Therefore, finding the mapping using auto-encoders will necessarily involve multiple mappings that must overlap smoothly. It seems difficult to imagine an autoencoder capable of developing such mappings automatically. Moreover, there is no guarantee of conserving Casimirs, as they are preserved only as accurately as the autoencoder allows. 
However, preserving Casimirs is crucial for accurately predicting long-term phase space evolution \citep{Dubinkina2007}, and should therefore be done as accurately as possible.  In the methods we develop in this paper, all Casimirs will be preserved to machine precision. These challenges of the PNN approach will be manifested even more strongly for the problem we consider here, namely, the dynamics of several interacting parts, where one would need to adequately describe the motion of momenta and the Lie group elements. 

The  SympNets and PNN approach was further extended in \citep{bajars2023locally} where volume-preserving neural networks \emph{LocSympNets} and their symmetric extensions \emph{SymLocSympNets} were derived, based on the composition of mappings of a certain type, with some of the examples being applied to Poisson-type systems such as rigid body motion and the motion of a particle in a magnetic field. However, the extension of the theory to more general non-canonical problems was difficult. 

In parallel with the development of integrators for canonical Hamiltonian systems based on canonical transformations in $(\bq, \bp)$, there has also been recent progress in data-based simulations of Poisson systems \citep{vaquero2023symmetry,vaquero2024designing}. These techniques rely on approximate solutions of Hamilton-Jacobi equations, which are generally very challenging. However, the methods developed in these papers allow for finding approximate solutions, with applications to systems such as rigid body motion.  While this direction looks promising, it remains unclear at the moment whether these techniques can achieve sufficiently accurate solutions to Hamilton-Jacobi equations for highly complex problems, such as coupled systems having an incomplete reduction, \emph{i.e.}, involving both momenta and Lie group elements as variables, as we consider here.

A method for addressing a particular class of Poisson systems, known as \emph{Lie-Poisson systems}, was recently developed in 
\citep{eldred2024lie}. Lie-Poisson systems arise from the complete symmetry reduction of a Hamiltonian system, where the dynamics are reduced to momentum only. For instance, the motion of a rigid body around a fixed center of mass, like that of a satellite, remains invariant under all rigid rotations in space. Then, the Lie-Poisson reduction principle \citep{MaRa2013} states that the equations can be written in terms of angular momentum $\bmu$ only. If $H(\bmu)$ is the Hamiltonian, then the equations of motion due to Euler and the corresponding Poisson bracket are given by 
\begin{equation}
\dot{\bmu} = - \pp{H}{\bmu} \times \bmu \, , \quad 
\frac{d F(\bmu)}{d t} = - \bmu \cdot \left( \pp{F}{\bmu} \times \pp{H}{\bmu} \right) : = \left\{ F, H \right\} \, . 
    \label{Euler_eq_PB}
\end{equation}
The bracket $\{ F, H\} $ defined in \eqref{Euler_eq_PB} is an example of the \textit{Lie-Poisson bracket}, and is based on the commutator of the Lie algebra $\mso(3)$ of the rotation group $SO(3)$. Generally, if $ \left\langle  \cdot ,\cdot \right\rangle $ is the pairing between the elements of the Lie algebra $\mathfrak{g}$ (velocities) and its dual $\mathfrak{g}^*$ (momenta), and $[ \cdot, \cdot ]$ is the commutator of the Lie algebra, then the Lie-Poisson bracket is defined as 
\begin{equation}
   \left\{ F, H \right\}_{\rm LP}:=  -\left< \mu , \left[ \pp{F}{\mu} , \pp{H}{\mu}\right]  \right>.
    \label{LP_general}
\end{equation}
It was shown in \citep{eldred2024lie} that for Hamiltonians of a certain type, namely $H( \mu )=h(\left\langle a, \mu \right\rangle )$, the equations of motion given by \eqref{LP_general} can be solved exactly. The paper used a generalization of Poincar\'e theorem for Poisson systems, stating that every flow generated by a Poisson system preserves the Poisson bracket \citep{MaRa2013}, and the fact that any Lie-Poisson equations are explicitly solvable for every test Hamiltonian of the form $H( \mu )=h(\left\langle a, \mu \right\rangle )$. Then, Poisson transformations of phase space were computed that formed the analogs of canonical mappings in \emph{SympNets}; the dynamics were then constructed using a sequence of such transformations on every time step. When the parameters of these transformations were given by a neural network and then used in the dynamics reconstruction, such a method was called \emph{Lie-Poisson Neural Networks}, or \emph{LPNets}. In cases where the parameters are learned directly from data and the mappings themselves form a network, the method is known as \emph{Global LPNets}, or \emph{G-LPNets}. 

The G-LPNets approach, based on defining compositions of Poisson mappings with learned parameters, has shown good results in accuracy and efficiency of calculations for Lie-Poisson systems. Unfortunately, the G-LPNets approach is not directly applicable to systems with interacting parts, as the transformations derived in the earlier work \citep{eldred2024lie} depended on the particular form of the Lie-Poisson bracket. Indeed, G-LPNets relied on the particular form of Lie-Poisson bracket \eqref{LP_general} for the explicit integration of equations of motion for test Hamiltonians and for constructing the mapping. However, this form of the Lie-Poisson bracket is no longer valid when there is incomplete reduction, and there are not just momenta $\mu$ but also group elements present in the Poisson bracket. Therefore, new methods for simulating such systems must be sought. The solution to this problem is presented in this paper.

\subsection{Statement of the problem}

We consider the problem of describing the dynamics of $n$ coupled elements. In all computations and examples of this paper, for simplicity, we consider only $n=2$, although the theoretical background we develop will work for an arbitrary number of connected elements.  While more general values of $n$ can be readily considered, they lead to rather cumbersome expressions which do not add to theoretical understanding.  We denote the combined set of variables in the phase space, for brevity, as $ \mathbf{u} (t)$. The following aspects of the problem are considered known: 
\begin{enumerate}
    \item Data points $\left\{ \mathbf{u} _i= \mathbf{u} (t_i)\right\}_{i=1}^N$ at the given times $(t_0, t_1, \ldots, t_N)$; 
    \item The geometric nature of each element and the symmetry of the system; 
    \item The topology of the system, namely, which elements are connected to which elements.
\end{enumerate}

That information presents the geometric/topological setup of the system, representing the fundamental mathematical information about the system. As we will discuss below, this known information represents the knowledge of the Poisson bracket. On the other hand, any information about the physics of the system, such as the nature of the potential energy in the Hamiltonian, the functional form of the terms, \emph{etc.} are considered unknown. In other words, we consider the knowledge of the problem's topology and Poisson bracket as primary, to be hard-coded exactly in the network, while the knowledge of physics is known only approximately and determined by the learning procedure.

\subsection{Novelty of this work}

\begin{enumerate}
    \item  Through the analysis of general principles of the systems with several interacting parts, we show that we can create transformations based on the flows of test Hamiltonians depending on both the momenta and the group elements; 
    \item We demonstrate that the method is capable of learning the dynamics across the entire phase space and conserving all Casimirs with machine precision, surpassing the accuracy of the ground truth; 
    \item We consider increasingly complex applications of our methods: 
    \begin{enumerate}
        \item We first apply our method to an integrable system of rotation of two interacting rigid bodies about a common axis, showing excellent long-term stability of the method; 
        \item We study arbitrary three-dimensional rotation of two rigid bodies about a common center, demonstrating the optimal possible growth of errors by our methods in accordance with the Lyapunov exponent and bounded oscillatory behavior of energy in the long-term.; 
        \item We learn and simulate the evolution of two connected and interacting $SE(3)$ elements, describing a simple discretization of an elastic rod having only two elements. All eighteen-dimensional phase space dynamics are learned with high accuracy from only 1000 data pairs, exhibiting oscillatory energy behavior and optimal short-term accuracy in line with the Lyapunov exponent.
    \end{enumerate}
\end{enumerate}
In the first example, the number of dimensions of the phase space is three; in the second, nine: and in the third, eighteen. 

We are not currently aware of any data-based method that can describe the dynamics of the whole phase space\footnote{When we mention the dynamics of the 'whole phase space', we, of course, mean only the dynamics of the domain where the data is available. Just as the literature before us, we use the term to discriminate between learning the dynamics of the whole domain with data and, for example, learning the dynamics on the Casimir manifolds or learning a single trajectory.}. with competing accuracy, conserving all Casimirs, and utilizing a correspondingly low number of data points for learning the dynamics on high-dimensional manifolds.

\section{General setting for interacting systems on Lie groups}

In this section, we will provide the background on the dynamics of interacting elements to explain how to build the CLPNets. The general structure of the equations is essential for our derivations, so a brief description of these equations is warranted.

To keep the discussion general enough, we will use the language of Lie groups. 
Suppose the configuration manifold $Q$ of the system is a direct product of $n$ Lie groups, \emph{i.e.}, $Q = G \times G \times \ldots \times G$. In practice, for physical systems such as the discretization of elastic bodies, the Lie group $G$ is the group of rotations and translations $SE(3)$; for robotics and space applications, $G=SE(3)$ or the rotation group $G=SO(3)$.

In general, the Lagrangian of such a system depends on the elements $g_i \in G$ of each Lie group and on the corresponding velocities $ \dot  g _i$. It is therefore  a scalar function of these variables: 
\begin{equation} 
L = L ( g_1, \dot g_1, g_2, \dot g_2, \ldots , g_n, \dot g_n): T(G \times ... \times G) \rightarrow \mathbb{R}\, ,
\label{Lagr_general}
\end{equation} 
defined on the tangent bundle of $G \times ... \times G$.
For the physical systems we are interested in, such as elastic bodies, the Lagrangian is left invariant with respect to the simultaneous action of $G$ on each factor, \emph{i.e.}, we have
\begin{equation}\label{label_G_inv} 
L ( hg_1,h \dot g_1,h g_2, h\dot g_2, \ldots , hg_n, h\dot g_n)= L ( g_1, \dot g_1, g_2, \dot g_2, \ldots , g_n, \dot g_n),
\end{equation}
for all $h \in G$. One can also consider right-invariant systems, where $h$ will be positioned on the right of the $(g_i, \dot g_i)$ in \eqref{label_G_inv}, with only a slight correction to the formulas. However, most physical systems we are interested in are left-invariant.   Physically, left-invariance states that the quantities of interest are formulated in the body frame, as is the case with the evolution of a rigid body, or the laws of elasticity; the right-invariance indicates the laws of motion formulated in the spatial frame, as is the case in fluids.
 A class of Lagrangians with this symmetry is given by expressions of the form
\begin{equation} 
L ( g_1, \dot g_1, g_2, \dot g_2, \ldots , g_n, \dot g_n) = \ell\big(\omega_1, \omega_2, \ldots \omega_n, \{ p_{ij} \} \big)\,, 
\label{Lagr_reduced}
\end{equation} 
where $\omega _i =g^{-1}_i \dot g_i\in \mathfrak{g} $ are the \textit{body velocities} and $p_{ij}=g_i^{-1} g_j \in G$, $i, j = 1, \ldots, n$, $i \neq j$, the \textit{relative Lie group configurations}, with $ \mathfrak{g} $ the Lie algebra of $G$. For instance, for $G=SO(3)$, $ \omega _i \in \mathfrak{so}(3)$ are the body angular velocities and $p_{ij} \in SO(3)$ the relative orientations of the bodies. This is precisely the case we are going to consider in Section~\ref{sec_SO3}.

The equations of motion we derive below are similar to the treatment of dendronized polymer dynamics considered in \citep{gay2012exact}, with the slight generalization that we do not assume the actual tree-like topology and assume arbitrary interactions between the particles. Particular examples of the physical application of this method are: 
\begin{enumerate}
\item The discretization of Cosserat and Kirchhoff's models for elastic rods with $G=SE(3)$ \citep{SiMaKr1988,antman2005general}, via Lie group integrators (\cite{demoures2015discrete}). 
In that case, only the nearest neighbors contribute to the coupling,  \emph{i.e.},  $i = j \pm 1 $ in \eqref{Lagr_reduced}. This is the case we consider in Section~\ref{sec_SE3} for two particles.
\item The discretization of multi-dimensional elastic systems, in which case the indices $i$ and $j$ become multi-indices with the same dimension as the dimension of the discretized system. This is a case we will consider in a follow-up work.
\item Quantum mechanical computations with several quantum particles in a given volume, with $G=SU(2)$. In that case, all particles entangle, so $i,j$ can take arbitrary values. 
\end{enumerate}

\subsection{Equations of motion}

The dynamics of the system is governed by the Euler-Lagrange equations associated with the Lagrangian in \eqref{Lagr_general}. These Euler-Lagrange equations are found from the Hamilton principle
\begin{equation}\label{HP} 
\delta \int_{t_0}^{t_f} L ( g_1, \dot g_1,  \ldots , g_n, \dot g_n) \, {\rm d} t=0\,,
\end{equation}
for arbitrary variations $ \delta g _i $, $i=1,...,n$, vanishing at the temporal extremities $t=t_0,t_f$.

In order to obtain the equations of motion in terms of the variables $ \omega _i$ and $p_{ij}$ used in \eqref{Lagr_reduced}, we follow the approach of Lagrangian reduction by symmetry and rewrite the critical action principle \eqref{HP} in terms of the Lagrangian $\ell$ in \eqref{Lagr_reduced}. The variations of $ \omega _i$ and $p_{ij}$ induced by the variations $ \delta g _i $ are found as
\begin{equation} 
\begin{aligned} 
\de \omega_i & =  \dot \sigma_i+\operatorname{ad}_{\omega_i} \sigma_i  
\\ 
\de p_{ij} & = - \sigma_i p_{ij} + p_{ij} \sigma_j \,,
\end{aligned} 
\label{variations_eqs}
\end{equation} 
where $ \sigma _i \in \mathfrak{g} $ are the symmetry-invariant variations defined by $\sigma_i:= g_i^{-1} \de g_i$ and $\operatorname{ad}_{ \omega _i} \sigma _i=[ \omega _i, \sigma _i]$, see \ref{notations}.
Hence, Hamilton's principle \eqref{HP} induces the variational principle
\begin{equation}\label{reduced_VP} 
\delta \int_{t_0}^{t_f}\ell\big(\omega_1, \ldots \omega_n, \{ p_{ij} \} \big)\, {\rm d} t=0\,,
\end{equation} 
with respect to the constrained variations given in \eqref{variations_eqs} where $ \sigma _i$ are arbitrary curves in $ \mathfrak{g} $ vanishing at the temporal extremities $t=t_0,t_f$.
Taking variations in \eqref{reduced_VP} we get
\begin{equation}
\de \int_{t_0}^{t_f} \ell \mbox{d} t = \int _{t_0}^{t_f}
\sum_i 
\left< 
\pp{\ell}{\omega_i} \, , \, \de \omega_i 
\right> 
+ 
\sum_{i,j; i \neq j}
\left< 
\pp{\ell}{p_{ij}} \, , \, \de p_{ij}  
\right> \mbox{d} t.
    \label{var_critical_action}
\end{equation}
Using \eqref{variations_eqs} and collecting terms proportional to $\sigma_i$, perhaps under appropriate integration by parts, leads to the equations of motion: 
\begin{equation} 
\begin{aligned} 
&\frac{d}{dt} \pp{\ell}{\omega_i} - \operatorname{ad}^*_{\omega_i}
\pp{\ell}{\omega_i}+ \sum_j \left( \pp{\ell}{p_{ij}} p_{ij}^{-1} 
- p_{ji}^{-1}  \pp{\ell}{p_{ji}} \right) =0 
\\ 
&\dot p_{ij} =  - \omega_i p_{ij} + p_{ij} \omega_j \,,
\end{aligned} 
\label{variations_eqs_gen}
\end{equation} 
where the last equation is obtained by differentiating the expression of $p_{ij}=g_i^{-1} g_j$. 
These equations can be simplified somewhat using the fact that $p_{ij}=p_{ji}^{-1}$. The notations used in \eqref{variations_eqs_gen} are detailed in \ref{notations}.

In what follows, we consider $n=2$, which we will use in all examples of the paper, so we can denote $p_{12}=p=g_1^{-1} g_2$ and equations \eqref{variations_eqs_gen} simplify to be 
\begin{equation} 
\begin{aligned} 
&\frac{d}{dt} \pp{\ell}{\omega_1}  - \operatorname{ad}^*_{\omega_1}
\pp{\ell}{\omega_1}+   \pp{\ell}{p} \,  p^{-1} =0 
\\ 
&\frac{d}{dt} \pp{\ell}{\omega_2}  - \operatorname{ad}^*_{\omega_2}
\pp{\ell}{\omega_2}-p^{-1}\pp{\ell}{p} =0 \\
&\dot  p = - \omega _1 p + p \omega _2\,,
\label{eqs_two_groups}
\end{aligned} 
\end{equation}
with $\ell$ the reduced Lagrangian defined by
\begin{equation}\label{red_Lagr_2} 
\ell: \mathfrak{g} \times \mathfrak{g} \times G \rightarrow \mathbb{R} , \quad \ell( \omega _1, \omega _2,p)=L(g_1, \dot  g_1, g_2,\dot  g_2)\,.
\end{equation}

Note that there is an apparent discrepancy between \eqref{variations_eqs_gen} for an arbitrary number of groups and \eqref{eqs_two_groups}, as there is only one term involving $\pp{\ell}{p}$ in \eqref{eqs_two_groups}, not two. That is due to the fact that in \eqref{variations_eqs_gen} we allowed the Lagrangian to depend on both $p_{ij}$ and $p_{ji}$, which is convenient in several applications. However, due to the symmetry of the Lagrangian, it is possible to restrict to consider only $j=i+1$, see \ref{notations} for more details.

\subsection{Hamiltonian formulation and Poisson brackets}


Going to the Hamiltonian description, we define the symmetry-reduced Hamiltonian associated to $\ell$ in \eqref{red_Lagr_2} by the Legendre transform as follows: 
\begin{equation} 
h: \mathfrak{g} ^* \times \mathfrak{g} ^* \times G \rightarrow \mathbb{R} , \quad h( \mu _1, \mu _2, p)=\sum_i \left< \mu_i, \omega_i \right> - \ell ( \omega _1, \omega _2, p)\, ,
\label{Hamiltonian_def}
\end{equation} 
where $ \omega _1$ and $ \omega _2$ on the right-hand side are uniquely determined from the given $ \mu _1$, $ \mu _2$, $p$ by the condition
\begin{equation}\label{hyperregular} 
\frac{\partial \ell}{\partial \omega _1}( \omega _1, \omega _2, p)= \mu _1, \quad  \frac{\partial \ell}{\partial \omega _2}( \omega _1, \omega _2, p)= \mu _2.
\end{equation} 
This definition assumes that $\ell$ is hyperregular, \emph{i.e.}, the map $( \omega _1, \omega _2) \rightarrow ( \mu _1, \mu _2)$ given by \eqref{hyperregular} is a diffeomorphism for all $p$. From \eqref{hyperregular}, we get
\[
\frac{\partial h}{\partial \mu _i}= \omega _i, \quad \frac{\partial h}{\partial p}=- \frac{\partial \ell}{\partial p} 
\]
so that \eqref{eqs_two_groups} can be equivalently in terms of the Hamiltonian $h$ as
\begin{equation} 
\left\{ \, 
\begin{aligned} 
&\dot \mu_1  - \operatorname{ad}^*_{\pp{h}{\mu_1}}
\mu_1 -   \pp{h}{p} \,  p^{-1} =0 
\\ 
&\dot \mu_2  - \operatorname{ad}^*_{\pp{h}{\mu_2}}
\mu_2 +p^{-1}\pp{h}{p} =0 
\\
&\dot p  = - \pp{h}{\mu_1 } p + p \pp{h}{\mu_2}\,.
\label{hamiltonian_eqs_two_groups}
\end{aligned} 
\right. 
\end{equation}
Taking an arbitrary function $f: \mathfrak{g} ^* \times \mathfrak{g} ^* \times G \rightarrow \mathbb{R} $ and computing its time evolution along a solution $\mu _1(t)$, $ \mu _2(t)$, $p(t)$ of \eqref{hamiltonian_eqs_two_groups}, we get \eqref{hamiltonian_eqs_two_groups} in the Hamiltonian form
\[
\frac{d}{dt} f=\{f,h\}\, 
\]
with the non-canonical Poisson bracket
\begin{equation}\label{reduced_Poisson}
\begin{aligned} 
\{f, h\}= &-\left\langle \mu _1, \left[ \frac{\partial f}{\partial \mu _1},  \frac{\partial h}{\partial \mu _1} \right] \right\rangle -\left\langle \mu _2, \left[ \frac{\partial f}{\partial \mu _2},  \frac{\partial h}{\partial \mu _2} \right] \right\rangle \\
& + \left\langle \frac{\partial f}{\partial \mu _1}, \frac{\partial h}{\partial p}  p ^{-1} \right\rangle - \left\langle \frac{\partial f}{\partial \mu _2}, p ^{-1} \frac{\partial h}{\partial p}   \right\rangle\\
& - \left\langle \frac{\partial h}{\partial \mu _1}, \frac{\partial f}{\partial p}  p ^{-1} \right\rangle + \left\langle \frac{\partial h}{\partial \mu _2}, p ^{-1} \frac{\partial f}{\partial p}   \right\rangle.
\end{aligned}
\end{equation}

We have included additional information about the mathematics of coupled systems in \ref{app_sec_math_coupled}. In that Section of the Appendix, the reader will find the mathematical background, useful definitions, and also the general expression for a class of Casimirs of the Poisson bracket \eqref{reduced_Poisson} in terms of the Casimirs of the single (non-coupled) system. We recall that Casimirs are functions that are preserved along the trajectories of all Hamiltonian systems associated with a given Poisson bracket. To keep the level of abstraction down, we will write these Casimirs explicitly for every particular case we consider.  We also recall that, according to a general result on Poisson structures, the phase space of the non-canonical Hamiltonian system \eqref{hamiltonian_eqs_two_groups} is foliated into symplectic leaves that remain invariant under its flow. As we will see, the application of CLPNets preserves these symplectic leaves and the Casimir functions with machine precision.

\section{Defining the building blocks of Neural Network using transformations generated by test Hamiltonians}
\label{sec:general_LPNets}

We use the fact that for any Poisson bracket, any Hamiltonian generates a flow that preserves the Poisson bracket \citep{MaRa2013}. While the exact Hamiltonian for the system is unknown, the pieces of trajectories in the phase space are known. Our goal will be to construct Poisson transformations of the phase space, which, when composed together with certain parameters depending on the phase space, can explain all the observed motion in some average sense.

To derive the transformations, we consider a class of test Hamiltonians and compute the corresponding mappings generated by these Hamiltonians. The main idea is to classify the test Hamiltonians based on the variables they depend on. We only present the cases for two interacting bodies as described in \eqref{hamiltonian_eqs_two_groups}; a generalization to an arbitrary number of bodies can be done similarly.

\paragraph{Hamiltonians depending on the momenta of the first body $\mu_1$} We consider
$h_1 = f_1( \left< a_1, \mu_1 \right>)$, where $f_1(x)$ is an arbitrary function of the scalar argument and $a_1 \in \mathfrak{g} $. The equations of motion \eqref{hamiltonian_eqs_two_groups} become 
\begin{equation}
 \dot \mu_1 = f'_1( \left< a_1, \mu_1 \right>) \operatorname{ad}^*_{a_1} \mu_1 \, , \quad 
 \dot \mu_2 = 0 \, , \quad \dot p = - a_1 p \,.
    \label{h_1}
\end{equation}
Since the argument of the function $f_1$ is conserved along \eqref{h_1}, the term $f'_1( \left< a_1, \mu_1 \right>) $ becomes a constant prefactor. The equation for $\dot \mu_1$ is linear in momenta and is integrable explicitly \citep{eldred2024lie}, with $f'_1( \left< a_1, \mu_1 \right>)$ becoming a tunable parameter dependent on the position $\mu_1$.  Equations \eqref{h_1} leave $\mu_2$ unchanged, $\mu_2=\mu_2(0)$, and the equation for $p$ is again integrable since it is a linear equation in the components of $p$ when the Lie group element $p$ is represented as a matrix. As it turns out, the explicit expressions for $p$ in the cases of the groups $SO(3)$ and $SE(3)$ are given in terms of left and right rotations and translations. Thus, we have an explicit set of transformations $\mathbb{T}_1$ parameterized by $a_1 \in \mathfrak{g} $: 
 \begin{equation}
     \label{h_1_sol}
     (\mu_1(t), \mu_2(t), p(t) ) = \mathbb{T}_1 (a_1, t ) \left[ (\mu_1(0), \mu_2(0), p(0) ) \right]\,.
 \end{equation}
 
\paragraph{Hamiltonians depending on the momenta of the second body $\mu_2$} Similarly, we take $h_2 = f_2( \left< a_2, \mu_2 \right>)$. Equations \eqref{hamiltonian_eqs_two_groups} lead to: 
\begin{equation}
 \dot \mu_2 = f_2'( \left< a_2, \mu_2 \right>) \operatorname{ad}^*_{a_2} \mu_2 \, , \quad 
 \dot \mu_1 = 0 \, , \quad \dot p =   p a_2 \,.
    \label{h_2}
\end{equation}
As above, since $\left< a_2, \mu_2 \right>=$ const, equations \eqref{h_2} are explicitly solvable for an arbitrary Lie group. The momentum $\mu_1$ is unchanged, the momentum $\mu_2$ is evolved according to a linear evolution equation, and the equation for the Lie group element $p$ is linear and can be solved explicitly. The explicit set of transformations for this case is written symbolically as 
\begin{equation}
     \label{h_2_sol}
     (\mu_1(t), \mu_2(t), p(t) ) = \mathbb{T}_2 (a_2, t ) \left[ (\mu_1(0), \mu_2(0), p(0) ) \right]\,.
\end{equation}

\paragraph{Hamiltonians depending on the relative position/orientation $p$} Let us consider $h=F_3(p)$, where $F_3(p)$ is now an arbitrary function of the group element $p$.  Equations \eqref{hamiltonian_eqs_two_groups} lead to the equations: 
\begin{equation}
 \dot \mu_1 = -  \pp{F_3}{p}p^{-1} \, , \quad 
 \dot \mu_2 = p^{-1} \pp{F_3}{p}  \, , \quad \dot p =  0 \,.
    \label{h_3}
\end{equation} 
One can see that equations \eqref{h_3} are also solvable explicitly, for an \emph{arbitrary} scalar function $F_3(a_3, p)$, where $a_3$ is a set of parameters:  
\begin{equation}
  \mu_1 = \mu_1(0) -  \pp{F_3}{p}p^{-1} t \, , \quad 
 \mu_2 =  \mu_2 (0) + p^{-1} \pp{F_3}{p} t  \, , \quad p=p(0) \, . 
    \label{h_3_sol}
\end{equation}
As above, the explicit set of transformations is symbolically written as
\begin{equation}
     \label{h_3_sol_symb}
     (\mu_1(t), \mu_2(t), p(t) ) = \mathbb{T}_3 (F_3,t) \left[ (\mu_1(0), \mu_2(0), p(0) ) \right]\,.
\end{equation}
In what follows, we will use functions $F_3( p)$ that can be expressed as a sum of activation functions of linear combinations of $p$, when the group element $p$ is represented as a matrix. 
 
We will build a neural network learning the dynamics in the whole space out of the transformations $(\mathbb{T}_1, \mathbb{T}_2, \mathbb{T}_3)$. To show how to apply this general theory, we proceed to the considerations of several particular cases.

\paragraph{Ground truth computations} Because of the complexity of the problems we consider, there is no universal method that is capable of computing the solution of all cases while preserving the structure of equations. There exist Lie-Poisson integrators \citep{marsden1999discrete} that preserve Casimirs with machine precision. However, in practice, these integrators have to be implemented separately for each particular case, and their implementation is rather challenging for the case of coupled Lie groups we consider here. Thus, we use a high accuracy BDF (Backward Differentiation Formula) algorithm from \emph{SciPy} package in Python to obtain all data for training and all ground truth solutions. The relative tolerance parameter is set to $10^{-10}$ and absolute tolerance to $10^{-12}$ for all simulations.

\section{Application to coupled rigid bodies}\label{sec_SO3}
\subsection{Problem statement and physical motivation}

Let us first apply this to the case of the dynamics of coupled rigid bodies. The physical picture we have in mind consists of several rigid bodies attached rigidly in space at their centers of mass, but free to rotate about the centers of mass. They interact using, for example, electrostatic interaction. The physical problem is illustrated in Figure~\ref{fig:coupled_rigid_bodies}. The system consists of two rigid bodies $\mathcal{B}_{1,2}$, which can rotate freely about a common point in space. Each rigid body $\mathcal{B}_i$, $i=1,2$, has a moment of inertia $\mathbb{I}_i$ and contains several charges $q_{i,j}$, $j=1, \ldots,m$, which are located in the coordinates $\boldsymbol{\xi}_{i,j}$ in the body frame and are fixed with respect to that particular body $\mathcal{B}_1$ or $ \mathcal{B}_2$. Two charges on each body are shown in this illustration. 
\begin{figure}
    \centering
    \includegraphics[width=0.5\linewidth]{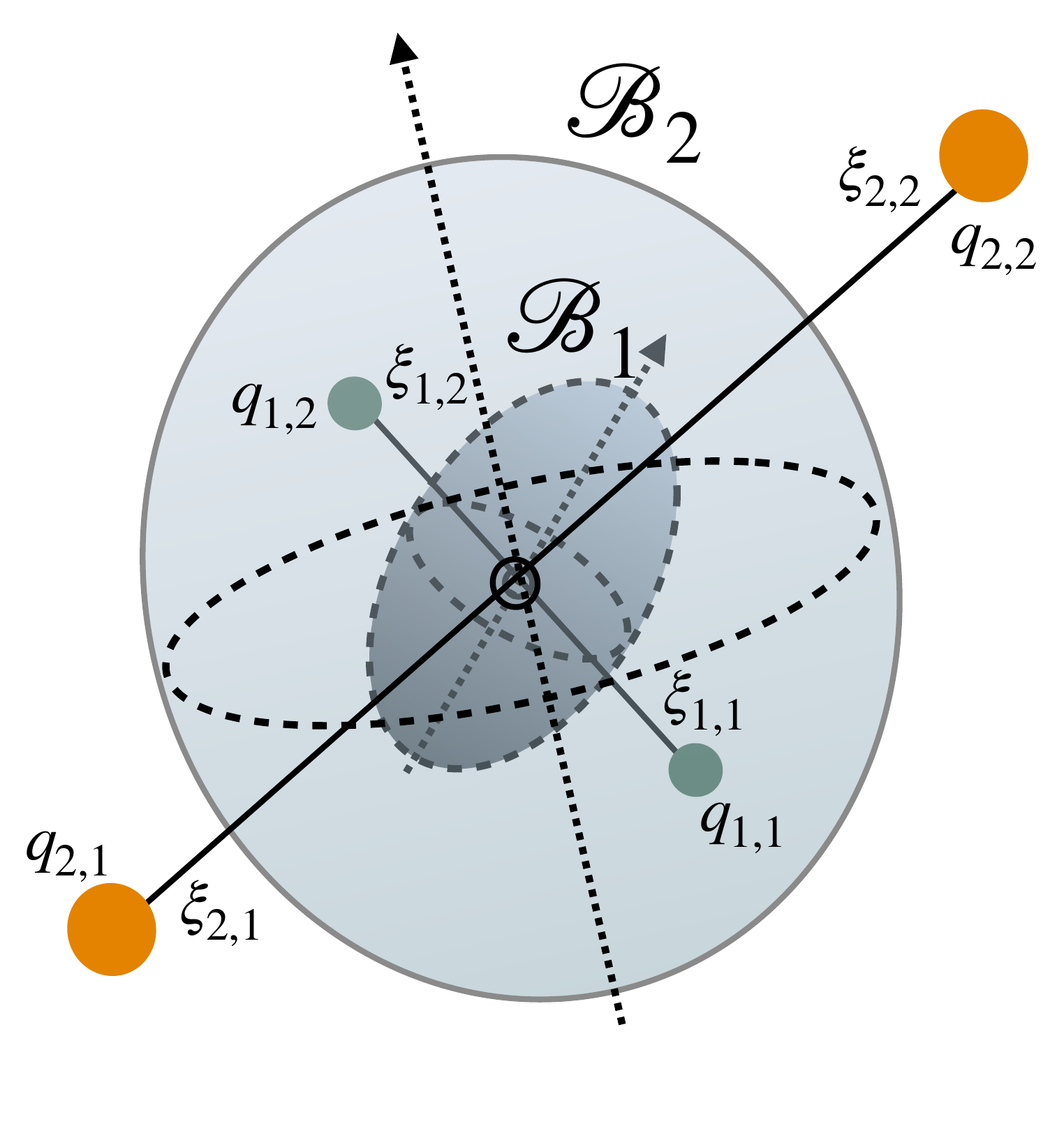}
    \caption{Two rigid bodies $\mathcal{B}_{1,2}$ rotating about a common point which is fixed in space. Each rigid body $i=1,2$ contains several charges $q_{i,j}$, $j=1, \ldots,m$, which are located in the coordinates $\boldsymbol{\xi}_{i,j}$ in the body frame. Two charges on each body are shown on this illustration.  }
    \label{fig:coupled_rigid_bodies}
\end{figure}

\subsection{Equations of motion}

The equations of motion for this system are found by specifying the general system \eqref{hamiltonian_eqs_two_groups} to the Lie group $G=SO(3)$ and taking the Hamiltonian of the coupled rigid bodies, given by the following expression:
\begin{equation}
    h( \bmu_1,\bmu_2, p) = \frac{1}{2} \mathbb{I}_1^{-1}\bmu_1 \cdot \bmu_1 + \frac{1}{2} \mathbb{I}_2^{-2}\bmu_1 \cdot \bmu_2
    + \sum_{i<j} \frac{q_{1,i} q_{2,j}}{d_{ij}(p)}\,,
    \label{SO3_coupled_main}
\end{equation}
see \ref{app_derivation_SO3} for the derivation. These equations are given as: 
\begin{equation}
    \begin{aligned} 
\dot \bmu_1 & = \bmu_1 \times \mathbb{I}_1^{-1} \bmu_1 -  \sum_{i<j} \frac{q_{1,i} q_{2,j}}{d_{ij}(p)^3} \left[ \bxi_{1,i} \times p \bxi_{2,j}  
\right]
\\ 
\dot \bmu_2 & = \bmu_2 \times \mathbb{I}_2^{-1} \bmu_2 +   \sum_{i<j} \frac{q_{1,i} q_{2,j}}{d_{ij}(p)^3} \left[ p^\mathsf{T} \bxi_{1,i} \times  \bxi_{2,j} 
\right] 
 \\ 
 \dot p & = - \widehat{\boldsymbol{\omega} }_1 p + p \widehat{\boldsymbol{\omega } }_2 \, , \quad \boldsymbol{\omega} _i := \mathbb{I}_i^{-1} \bmu_i\,,
    \end{aligned} 
    \label{SO3_coupled_equations}
\end{equation} 
see \eqref{mu_1_mu2_F_gen_so3} in  \ref{app_derivation_SO3}.
Here, we have used the standard notation associating a $3 \times 3$ antisymmetric matrix $u=\widehat{ \mathbf{u}}$ to a $3$-vector $\mathbf{u}$ in such a way that $\widehat{ \mathbf{u} }\, \mathbf{v}= \mathbf{u} \times \mathbf{v}$ for any $\mathbf{v} \in \mathbb{R}^3$. In components, the hat map is 
\begin{equation}
\widehat{ \mathbf{u} } = \left( 
\begin{array}{ccc} 
0 & - u_3 & u_2 \\
u_3 & 0 & - u_1 \\ 
-u_2 & u_1 & 0 
\end{array} 
\right) \, . 
    \label{hatmap_def}
\end{equation}
It is easy to verify from the last equation of \eqref{SO3_coupled_equations} that $\frac{d}{dt} (p^\mathsf{T} p )= [p^\mathsf{T} p, \boldsymbol{\omega }_2]$, so if $p^\mathsf{T}(0)p(0)=Id$, then we have $p(t)^\mathsf{T} p(t)=Id$ for all $t$. This shows that if $p(0) \in SO(3)$, then $p(t) \in SO(3)$ for all times.

For simulations, we take two charges on the body $i=1,2$, positioned at $\pm s_i \mathbf{E}_1$, where $s_1=1$ and $s_2=3$. The charges at $s_i \mathbf{E}_1$ are $q = -0.25$ and the charges at $- s_i \mathbf{E}_1$ are $q=0.25$. The moments of inertia for the bodies are $\mathbb{I} _1= {\rm diag}(1,2,3)$ and $\mathbb{I}_2 = {\rm diag}(2,3,4)$.

\subsection{Casimirs}

Let us consider the more general system 
    \begin{equation}
    \begin{aligned} 
 \dot \bmu_1 & =-\pp{h}{\bmu_1} \times \bmu_1 + \mathbf{F} 
 \\ 
 \dot \bmu_2 & = -\pp{h}{\bmu_2} \times \bmu_2 - p^\mathsf{T} \mathbf{F} 
 \\
 \dot p & = - \pp{h}{\mu_1} p + p \pp{h}{\mu_2} \,,
 \label{mu_1_mu2_F_gen}
 \end{aligned}
    \end{equation}
for an arbitrary Hamiltonian $h$.
Here, $\mathbf{F} = \left(  \pp{h}{p } p^\mathsf{T} \right)^\vee$ in our case, see \eqref{mu_1_mu2_F_gen_so3} in \ref{app_derivation_SO3}, but it could be an arbitrary internal force not necessarily coming from the derivative of $h$ with respect to $p$. 

Let us consider the quantity $\bu = \bmu_1 + p \bmu_2$. Then we have
\begin{equation}
\begin{aligned}
\dot \bu & = -\pp{h}{\bmu_1} \times \bmu_1 + \mathbf{F} 
+ p \left( -\pp{h}{\bmu_2} \times \bmu_2 \right) - \mathbf{F} + \dot p \bmu_2 
\\ 
& = -\pp{h}{\bmu_1} \times \bmu_1  
- p \left( \pp{h}{\bmu_2} \times \bmu_2 \right) - \pp{h}{\bmu_1} \times p \bmu_2 + p \left( \pp{h}{\bmu_2} \times \bmu_2 \right)  
 \\
 & = - \pp{h}{\bmu_1} \times \left( \bmu_1 + p \bmu_2 \right) = - \pp{h}{\bmu_1} \times \bu 
\end{aligned}
\label{dudt_equation}
\end{equation}
Thus, 
\begin{equation} 
\frac{d}{dt }| \bu |^2 = 0 \, , \quad \mbox{for all $h$} \, , 
\label{Casimir_derivation} 
\end{equation} 
so the Casimir has the form 
\begin{equation} 
C ( \boldsymbol{\mu} _1, \boldsymbol{\mu} _2,p)= | \bu|^2 = \left| \bmu_1 + p \bmu_2 \right|^2 = \rm{const} \,.
\label{Casimir_expression} 
\end{equation}  
Physically, $C$ can be understood as the absolute value of the angular momentum seen from either the first or the second rigid body. This Casimir is a particular case of the class given in \eqref{Casimir_coupled}, applied to the Lie group $G=SO(3)$. 
\rem{ 
\todo{VP: I verified and after finding a mistake in the program, it is really conserved! I also have much better convergence to the minimum, although it does not really converges as well as the case of previous G-LPNets  yet. }

\todo{VP: New ideas on data reduction here}
\color{magenta}
In principle, to learn the dynamics, one would need to generate the data in the space $(\bmu_1,\bmu_2, p)$ which has dimension $9$ (or, technically, speaking, even $15$ if we consider $p$ to be in the space of all matrices). The space of the high dimension is difficult to fill up with data densely enough. Instead, we use the following trick: at each time step, we learn a system with $p(0)={\rm Id}$. We also choose the new Hamiltonian 
\begin{equation} 
\widetilde{h}=\widetilde{h}(\bmu_1,\bnu, P)  \, , \quad P:= p p_0^{-1} = p p_0^T \, . 
\label{Q_coord} 
\end{equation}
and we will take $\bnu = Q \bmu_2$, where $Q$ is still undetermined.  Then, $\widetilde{h}$ defines the Hamiltonian on $\mso(3)^* \times \mso(3)^* \times SO(3)$ and the equations of motion for $\mu_1, \bmu$ are obtained from the Poisson bracket as 
\begin{equation}
\begin{aligned} 
\dot \bmu_1 & = - \pp{\widetilde{h}}{\bmu_1} \times \bmu_1 + \pp{\widetilde{h}}{P} P^T 
\\
\dot \bnu & = - \pp{\widetilde{h}}{\bnu} \times \bnu + P^T\pp{\widetilde{h}}{P}  
\label{mu_1_nu_eqs}
\end{aligned}
\end{equation} 
and if $\bnu = Q \bmu_2$, then the equation for $P$ reads 
\begin{equation}
    \dot P p_0 = - \pp{\widetilde{h}}{\widehat{\mu_1}} P p_0 
    + P p_0 Q \pp{\widetilde{h}}{\widehat{\nu}} \, , \quad 
    \Rightarrow \quad 
    \dot P = - \pp{\widetilde{h}}{\widehat{\mu_1}} P  
    + P  \pp{\widetilde{h}}{\widehat{\nu}} \, , \quad Q=p_0^{-1} = p_0^T
    \label{new_P_eq}
\end{equation}
Thus, we only need to learn the dynamics in the space of $(\bmu_1,\bnu, P)$ with the initial conditions $P(0)={\rm Id}$. For the reconstruction, When we encounter initial conditions $(\bmu_1^0,\bmu_2^0,p_0)$, we compute the dynamics with the initial conditions $(\bmu_1(0), \bnu(0) = p_0^T \bmu_2, P(0) = {\rm Id}$. 
After the step $h$, the results $(\bmu_1,\bnu, P)$ are transformed back into the coordinates as 
\begin{equation}
(\bmu_1 (h), \bmu_2(h), p(h) )  = ( \bmu_1 (h), p_0^T \bnu(h), P(h) p_0) \, . 
    \label{transform_step_mu1_mu2}
\end{equation}
\todo{VP: Maybe it is a bit fishy - I will need to think about this some more \ldots}

\color{blue} 
\todo{FGB: We can write the following general result that makes this work. We can indeed at each time learn the system with $p(0)= Id$, by multiplying  $p$ on the right, which changes $ \mu _2$ or on the left, which changes $ \mu _1$. This works for all groups.}

\begin{proposition}  If $ ( \mu _1(t), \mu _2(t), p(t))$ is a solution of the Hamiltonian system \eqref{hamiltonian_eqs_two_groups} associated to some Hamiltonian $h( \mu _1, \mu _2, p)$. Then, for each $G, H \in  G$, 
\[
(\nu _1(t), \nu _2(t), P(t)):= \left( \operatorname{Ad}^*_G \mu _1(t) ,\operatorname{Ad}^*_H \mu _2(t) , G ^{-1} p(t)H \right) 
\]
is a solution of \eqref{hamiltonian_eqs_two_groups} for the Hamiltonian  
\[
\widetilde h ( \nu _1, \nu _2, P):= h( \operatorname{Ad}^*_{G ^{-1} } \nu _1,  \operatorname{Ad}^*_{H ^{-1} } \nu _2, GPH ^{-1} )
\]
\end{proposition} 
\textbf{Proof.} We have
\[
\frac{\partial \tilde h}{\partial \nu _1}= \operatorname{Ad}_{G ^{-1} }  \frac{\partial h}{\partial \mu _1}, \quad \frac{\partial \tilde h}{\partial \nu _2}= \operatorname{Ad}_{H ^{-1} }  \frac{\partial h}{\partial \mu _2},  \quad \frac{\partial \tilde h}{\partial P}= G ^{-1} \frac{\partial h}{\partial p} H.
\]
For the first equation, for instance, we compute
\begin{align*}
\dot  \nu _1&=  \operatorname{Ad}^*_G \dot  \mu _1 = \operatorname{Ad}^*_G \left( \operatorname{ad}^* _{ \frac{\partial h}{\partial \mu _1} } \mu _1 \right) + \operatorname{Ad}^*_G \left( \frac{\partial h}{\partial p} p^{-1} \right) \\
&= \operatorname{ad}^* _{ \operatorname{Ad}_{G ^{-1} } \frac{\partial h}{\partial \mu _1} } \operatorname{Ad}_G ^*\mu _1 + G ^{-1} \frac{\partial h}{\partial p} p ^{-1} G\\
&= \operatorname{ad}^*_{ \frac{\partial \widetilde h}{\partial \nu _1} }\nu _1  + \frac{\partial \tilde h}{\partial P} P ^{-1} ,
\end{align*} 
similarly for the other equations. $\qquad\blacksquare$

\medskip 
\todo{\color{magenta}
Below, I also consider the following Hamiltonians which seem to work better: 
\[ 
h_3 = \sum_{\alpha, \beta} M_{\alpha \beta} f(p_{\alpha  \beta})+ 
N_{\alpha \beta} p_{\alpha  \beta}
\]
with $f'(x) = \sigma(x)$, a nonlinear activation function. 
For steps 1,2 we could also use 
\[ 
h_{i} = \sum_\alpha a_\alpha f(\mu_{i,\alpha}) + b_\alpha \mu_{i,\alpha} \, , \quad i = 1, 2\, . 
\]
The full $SO(3)$ version doesn't work yet, but I made an exact reduction of the full equations to $SO(2)$ motion and that started to work quite well. I am hopeful that $SO(3)$ will work as well. 
\color{black}
}
} 

\subsection{Explicit expressions for the Poisson transformations and CLPNets}\label{subsec_4_4}

The algorithm defined in Section~\ref{sec:general_LPNets}, which derives explicit expressions for transformations $\mathbb{T}_{1,2,3}$ forming the neural network, proceeds as follows. Suppose we have $N$ data pairs, each one corresponding to the beginning and the end of the trajectory of length $h$. At each time step, we have the initial and final angular momenta $(\bmu_{1,2}^0,\bmu_{1,2}^f)$ and the relative rotation at the initial and final point $p^0,p^f$. We choose three sub-steps with the time allocated to each step to be $\Delta t/3$, where $\Delta t = h$ is the total time difference between the steps. 
\begin{enumerate}
    \item[{\bf Step 1.}] Take the Hamiltonian to be $h_1 = f_1(\mathbf{a}_1 \cdot \bmu_1)$ for $f_1: \mathbb{R} \rightarrow \mathbb{R} $ and $ \mathbf{a}_1 \in \mathbb{R} ^3  $. The equations of motion are: 
    \begin{equation}
    \dot\bmu_1 = -  f_1'(\mathbf{a}_1 \cdot \bmu_1)\, \mathbf{a}_1 \times \bmu_1 \, , 
    \quad \dot \bmu_2 =0 \, , \quad \dot p = -  f_1'(\mathbf{a}_1 \cdot \bmu_1) \,\widehat{ \mathbf{a}_1} p\, .
    \label{mu1_SO3}
    \end{equation}
    Remember that $\mathbf{a}_1 \cdot \bmu_1=$ const during the motion. The evolution of $\bmu_1$ is a rotation around the axis $\mathbf{a}_1$ with a certain constant angular velocity given by $\omega_1 = f_1'(\mathbf{a}_1 \cdot \bmu_1) |\mathbf{a}_1|$. 
    In order to compute the evolution equation for $p$ on the first step, let us investigate how the change in $p$ affects any vector $\mathbf{v}$. We compute: 
    \begin{equation}
    \frac{d}{dt} p \mathbf{v}=   -  f_1'(\mathbf{a}_1 \cdot \bmu_1)\, \widehat{\mathbf{a}_1} p \mathbf{v} =  -  f_1'(\mathbf{a}_1 \cdot \bmu_1)\,\mathbf{a}_1 \times (p \mathbf{v}) 
    \label{p_eq_comp_1}
    \end{equation}
    so $p \mathbf{v}$ rotates with a given angular velocity about $\mathbf{a}_1$, and thus the result of the rotation is the same as the rotation of $\bmu_1$. This consideration 
    leads to the intermediary values 
    \begin{equation}
    \bmu_1^* = \mathbb{R}(\mathbf{a}_1, \omega_1\Delta t/3) \bmu_1^0 \, , \quad 
    \bmu_2 = \bmu_2^0 \, , \quad  p^* = \mathbb{R}(\mathbf{a}_1, \omega_1 \Delta t/3) p^0\, , 
    \label{step1_SO3}
    \end{equation}
    where $\mathbb{R}(\mathbf{a} _1 , \omega _1\Delta t/3) \in SO(3)$ is the rotation matrix about the normalized axis $\mathbf{a}_1$ by the angle $\varphi_1 = \omega_1 \Delta t/3$.
    
    {\bf Practical implementation:} On each step, we choose three subsequent transformations: $\mathbf{a}_{1,i} = \beta_{1,i} \mathbf{e}_i$, where $\mathbf{e}_i$, $i=1,2,3$ are the basis vectors in the space of $\bmu$. Then, $\mathbf{a}_{1,i} \cdot \bmu_1 = \beta_{1,i} \mu_{1,i}$ and the rotation matrices in \eqref{step1_SO3} are simply rotations about the basis axes $\mathbf{e}_i$, $i=1,2,3$. Each of the transformations corresponds to the choice of Hamiltonians 
    \begin{equation}
    h_{1,i} = f_{1,i} ( \beta _{1,i} \mu_{1,i}) \, , \quad \mbox{with} \quad \pp{h_{1,i}}{\bmu_1} =  \left( \alpha_{1,i} \sigma(\beta_{1,i} \mu_{1,i}) + \gamma_{1,i} \right)\mathbf{e}_i \, , 
        \label{h1_practical_SO3}
    \end{equation}
    where $\alpha_{1,i}$, $\beta_{1,i}$ and $\gamma_{1,i}$ are parameters to be determined and $\sigma(x)$ is the activation function.
    
    \item[{\bf Step 2.}] Take the Hamiltonian to be $h_2 = f_2(\mathbf{a}_2 \cdot \bmu_2)$, for $f_2: \mathbb{R} \rightarrow \mathbb{R} $ and $ \mathbf{a}_2 \in \mathbb{R} ^3$. The equations of motion are: 
    \begin{equation}
    \dot\bmu_1 =  0 \, , \quad \dot \bmu_2 = -  f_2'(\mathbf{a}_2 \cdot \bmu_2)\,\mathbf{a}_2 \times \bmu_2 \, , 
     \quad \dot p =p \,\widehat{\mathbf{a}_2}\,f_2'(\mathbf{a}_2 \cdot \bmu_2)\, , 
    \label{mu2_SO3}
    \end{equation}
    leading to the intermediary values 
    \begin{equation}
    \bmu_1 = \bmu_1^* \, , \quad \bmu^*_2 = \mathbb{R}(\mathbf{a}_2, \omega _2\Delta t/3) \bmu_2^0 \, ,  \quad  p^f = p^*\mathbb{Q}_2(\mathbf{a}_2, \omega _2\Delta t/3) \, , 
    \label{step2_SO3}
    \end{equation}
    where $ \omega _2= f'_2( \mathbf{a}_2 \cdot \boldsymbol{\mu} _2)| \mathbf{a}_2|$ and with $\mathbb{Q}_2(\mathbf{a}_2,h/3)$  the linear operator mapping the initial to the final conditions of the linear ODE $\dot p = p\,\widehat{ \mathbf{a}_2}\,  f_2'(\mathbf{a}_2 \cdot \bmu_2)$. In order to find this operator, we notice that the equation of motion for $p^\mathsf{T}$ is just a rotation since at this step 
        \begin{equation}
    \frac{d}{dt} p^\mathsf{T} \mathbf{v}=   -f_2'(\mathbf{a}_2 \cdot \bmu_2)\,\widehat{ \mathbf{a}_2} p^\mathsf{T}  \mathbf{v} = -  f_2'(\mathbf{a}_2 \cdot \bmu_2)\,\mathbf{a}_2 \times (p^\mathsf{T} \mathbf{v}) \,,
    \label{p_eq_comp_1_gen}
    \end{equation}
    for any vector $ \mathbf{v} $. So $p^\mathsf{T} \mathbf{v}$ is rotating with a fixed angular velocity and thus 
    \begin{equation}
    \mathbb{Q}_2^\mathsf{T}  = \mathbb{R} (\mathbf{a}_2, \omega_2 \Delta t/3)  \, ,  \quad p^f =p^* \mathbb{R}(\mathbf{a}_2, \omega_2\Delta t/3)^\mathsf{T}.
    \label{SO3_solution_LPNets}
    \end{equation}  
    We take $p=p^f$ since at the next step, there will be no more changes in $p$. 
    
{\bf Practical implementation:} On each step, we choose three subsequent transformations: $\mathbf{a}_{2,i} = \beta_{2,i} \mathbf{e_i}$, so $\mathbf{a}_{2,i} \cdot \bmu_2 = \beta_{2,i} \mu_{2,i}$. Each of the transformations corresponds to the choice of Hamiltonians 
    \begin{equation}
    h_{2,i} = f_{2,i} ( \beta _{2,i} \mu_{1,i}) \, , \quad \mbox{with} \quad \pp{h_{2,i}}{\bmu} =  \left( \alpha_{2,i} \sigma(\beta_{2,i} \mu_{2,i}) + \gamma_{2,i} \right) \mathbf{e}_i\, , 
        \label{h2_practical_SO3}
    \end{equation}
    where, again, $\alpha_{2,i}$, $\beta_{2,i}$ and $\gamma_{2,i}$ are parameters to be determined and $\sigma(x)$ is the activation function. 
\item[{\bf Step 3.}] Since $p$ is a $3 \times 3$ matrix, we take the following expression for the Hamiltonian $h=h(p)$: 
\begin{equation}
h(p) = \sum_{i,j} M_{ij}f(p_{ij}) + N_{ij} p_{ij} \, , \quad 
\pp{h}{p_{ij}} = \sum_{i,j}M_{ij}  \sigma(p_{ij}) + N_{ij} \, , 
\label{h_p_LPNets}
\end{equation}
where all the indices run from $1$ to $3$. The explicit form of the transformations for Step 3 is obtained directly from \eqref{h_3_sol} as 
\begin{equation}
\begin{aligned} 
  \mu_1 = \mu_1(0) - \left( \pp{h}{p} p^\mathsf{T} \right)^\vee t \, , \quad 
 \mu_2 =  \mu_2 (0) + \left( p^\mathsf{T} \pp{h}{p}  \right)^\vee t  \, , \quad p=p(0) \, .
\end{aligned}
\label{step3_SO3}
\end{equation}
\end{enumerate}
\rem{ 
\begin{remark}[On alternative forms of test Hamiltonians \eqref{h_p_LPNets} in Step 3] 
{\rm The form \eqref{h_p_LPNets} can be written compactly in a coordinate-free form using the $3 \times 3$ matrices $\mathbb{M}$, $\mathbb{K}$, and $\mathbb{N}$ as
\begin{equation}
h (p)= {\rm Tr}\, \big( \mathbb{M} \sigma( \mathbb{K} p)\big) + 
{\rm Tr} \big( \mathbb{N}^\mathsf{T} p\big)  \,.
\label{h_p_coordinate_free}
\end{equation}
One could consider simplified forms of test Hamiltonians, for example, taking $K_{ki}=\delta_{ki}$ leading to the test Hamiltonians in the form  
\begin{equation}
h(p)= \sum_{i,j} h_{ij}, \quad h_{ij}(p):= M_{ij} f\big(p_{ij}\big) +N _{ij}p_{ij}
\label{h_p_alt} 
\end{equation}
although these forms are harder to write in coordinate-free forms such as \eqref{h_p_coordinate_free}. More importantly, in the form \eqref{h_p_coordinate_free}, each test Hamiltonian $h_{ij}$ depends on all elements of the matrix $p$;
\todo{ \textcolor{green}{I am not quite sure to understand}: The Hamiltonian in \eqref{h_p_LPNets}/\eqref{h_p_coordinate_free}  can be written as
\[
h= \sum_{ij}h_{ij}, \quad h_{ij}(p)= \sum_k M_{jk} f(K_{ki}p_{ij})+ \gamma _{ij}p_{ij}
\]
so $h_{ij}$ also depends only on $p_{ij}$. I am not sure to see how this compares with \eqref{h_p_alt}. They are not particular cases of each others. 
\\
\textcolor{magenta}{Sorry, there was a mistake in equation \eqref{h_p_alt}, it should have been $K_{ki}=\delta_{ki}$. I corrected above. I tried to run the system with the method you suggested and it actually works just as well! I removed K and rewrote to use your method since it optimizes the number of parameters and makes the discussion uniform. }}
whereas in \eqref{h_p_alt},  each test Hamiltonian only depends on the individual element $p_{ij}$. Thus, in the general philosophy of Neural Networks, the form \eqref{h_p_coordinate_free} is more robust by providing a 'mixing' of elements of $p$ in the previous step, while having exactly the same number of parameters as the alternative form \eqref{h_p_alt}. 
}
\end{remark}
} 

\rem{ 
Let us take $ h_3(p)= \left\langle A_3, \tau ^{-1} (p) \right\rangle $ with constant $A_3 \in \mathfrak{g} ^* $. We compute $ \frac{\delta h_3}{\delta p}= \left( [{\rm d} \tau ^{-1} _A] ^* A_3 \right) p $ where $A = \tau   ^{-1} (p)$.
The equations for step 3 are
\begin{equation}
\begin{aligned} 
    \dot  \mu _1 & = \frac{\partial h_3}{\partial p} p ^{-1} = [{\rm d} \tau ^{-1} _A] ^* A_3 \, , \quad A:= \tau   ^{-1} (p) 
    \\
    \dot  \mu _2 & = - p ^{-1} \frac{\partial h}{\partial p} = - \operatorname{Ad}^*_p [{\rm d} \tau ^{-1} _A] ^* A_3= - [{\rm d} \tau ^{-1} _{-A}] ^* A_3 
    \\
    \dot p & = 0 
\end{aligned}
\end{equation}
\[
\dot  \mu _2 = - p ^{-1} \frac{\partial h}{\partial p} = - \operatorname{Ad}^*_p [{\rm d} \tau ^{-1} _A] ^* A_3= - [{\rm d} \tau ^{-1} _{-A}] ^* A_3 ,
\]
where the last equality is a property of the Cayley map which arise from $ \tau (-A) \tau  (A)=id$. It is good that the two equations then just differ by changing $A \rightarrow  -A$.
    } 
\rem{ 
 \item[{\bf Step 3}] Take the Hamiltonian $h_3(p)$ such that 
 $\pp{h_3}{p}= p \widehat{A}_3$, where $\widehat{A}_3$ is an element of $\mso(3)$. The equations of motion are then 
 \begin{equation} 
\dot \bmu_1 = \mathbf{A}_3 \, , \quad 
\dot \bmu_2 = p \mathbf{A}_3 \, , \quad \dot p =0 \, . 
\label{p_SO3}
\end{equation} 
\todo{ \textcolor{blue}{ $\dot \bmu_1 = p\mathbf{A}_3$, $\dot \bmu_2 = - \mathbf{A}_3$.}}
In \eqref{p_SO3} we used the fact that $(p \widehat{A}_3 p^{-1})^\cup = p \mathbf{A}_3$ which can be computed as 
\[ 
\left< p \widehat{A}_3 p^{-1} , v \right> = \left<  p, \operatorname{Ad}_{p^{-1}} v \right> =  \mathbf{A}_3 \cdot  p^{-1} \mathbf{v} = p \mathbf{A}_3 \cdot \mathbf{v}
\]
    We thus integrate \eqref{p_SO3} to obtain
    \begin{equation} 
 \bmu_1^f = \mathbf{A}_3 \Delta t/3 + \bmu_1^* \, , \quad 
 \bmu_2^f = p^f \mathbf{A}_3\Delta t/3 + \bmu_1^* \, , \quad p=p^f \, . 
\label{p_SO3_final}
 \end{equation} 
 } 

\rem{ 
\begin{framed}
    \color{magenta} 
    VP: For step 3, we could also make something that is similar to SympNets, as follows. Let us consider vectors $\boldsymbol{\alpha}$, 
    $\boldsymbol{\beta}$, $\boldsymbol{\gamma}$, and a function $F(x)$ such that $F'(x) = \sigma(x)$, with $\sigma$ being the activation function. We now form the test Hamiltonian of the form 
    \begin{equation}
        h_3 (p) = F(x) + \operatorname{tr}\left(\mathbb{M} p\right)\, , x := \operatorname{tr}\left( (\boldsymbol{\alpha} \otimes \boldsymbol{\beta}) p \right) = \alpha_i p_{ij} \beta_{j} \, , \quad \mathbb{M}:= \rm{diag} (\boldsymbol{\gamma}) 
        \label{h_3_alt}
    \end{equation}
    The antisymmetric part of derivatives of $h_3$ with respect to $p$ are computed as 
    \begin{equation}
    \begin{aligned}
        \left( \pp{h}{p} p^{-1}\right)^A & = \frac{1}{2} \left( \mathbb{M} p^T - p \mathbb{M} \right) + \frac{1}{2}\sigma(x) \left( \boldsymbol{\alpha} \otimes p \boldsymbol{\beta} - 
       p \boldsymbol{\beta} \otimes   \boldsymbol{\alpha} 
        \right) 
     \\ 
    \left( p^{-1} \pp{h}{p} \right)^A & = \frac{1}{2} \left( p^T \mathbb{M}  -  \mathbb{M} p \right) + \frac{1}{2}\sigma(x) \left( p^T \boldsymbol{\alpha} \otimes \boldsymbol{\beta} - 
        \boldsymbol{\beta} \otimes  p^T \boldsymbol{\alpha} 
        \right) 
    \end{aligned}
        \label{dhdp_formulas_alt}
    \end{equation}
    In that case, $\boldsymbol{\alpha}$, $\boldsymbol{\beta}$ and $\boldsymbol{\gamma}$ are just parameters without any geometric meaning. 

    Equations \eqref{dhdp_formulas_alt} can be simplified further since for any three-dimensional vectors $\mathbf{a}$, $\mathbf{b}$, $\mathbf{v}$: 
    \begin{equation}\label{identity_2} 
    \frac{1}{2} \left( \mathbf{a} \otimes \mathbf{b} - \mathbf{b} \otimes \mathbf{a} \right)^\vee \cdot \mathbf{v}  = (\mathbf{a} \times \mathbf{b}) \cdot \mathbf{v}
    \end{equation} 

    \begin{equation}
    \begin{aligned}
        \left( \pp{h}{p} p^{-1}\right)^A & = \frac{1}{2} \left( \mathbb{M} p^T - p \mathbb{M} \right)^\vee + \sigma(x) \left( \boldsymbol{\alpha} \times p \boldsymbol{\beta} 
        \right) 
     \\ 
    \left( p^{-1} \pp{h}{p} \right)^A & = \frac{1}{2} \left( p^T \mathbb{M}  -  \mathbb{M} p \right)^\vee +  \sigma(x) \left( p^T \boldsymbol{\alpha} \times \boldsymbol{\beta} 
        \right) 
    \end{aligned}
        \label{dhdp_formulas_alt_vec}
    \end{equation}
    Alternatively, we could try something that is bilinear in $\mu$ and $p$, as follows. Consider  
    a vector $\mathbf{a}_{3,i}=\boldsymbol{\beta}_i$, and define 
\begin{equation}
h_{3,i} = \alpha_i F(x_i) + \gamma_i x  \, , \quad x_i :=  
\bmu_i \cdot (p \boldsymbol{\beta}_i)  
\label{h3_mu_p}
\end{equation} 
    with $F(x)$ such that $F'(x) =  \sigma(x_i) $. 
    The partial derivatives of, for example, $x_1$ are computed as 
    \[ 
    \pp{h_{3,i}}{\bmu_1} = p \boldsymbol{\beta} \, , \quad 
    \pp{h_{3,i}}{\bmu_2} = \mathbf{0} \, ,\quad 
     \left( \pp{h_{3,i}}{p} p^T \right)^\vee = - \bmu_1 \times p \boldsymbol{\beta} 
     \, , \quad 
     \left( p^T \pp{h_{3,i}}{p} p^T \right)^\vee = - p^T \bmu_1 \times \boldsymbol{\beta}_1
    \]
The equations of motion for $x_1 = \bmu_1 \cdot p \boldsymbol{\beta}_1$ are (I hope they are correct!)
    \begin{equation}
 \dot \bmu_1 =0 \, , \quad 
 \dot \bmu_2 = p^T \bmu_1 \times \boldsymbol{\beta}_1 \, , \quad 
 \dot p = - p \widehat{\beta_1}
 \label{mu_1_h3_eq}
    \end{equation}
    and similarly for $x_2 = \bmu_2 \cdot p \boldsymbol{\beta}_2$: 
        \begin{equation}
 \dot \bmu_1 = p^T \bmu_2 \times \boldsymbol{\beta}_2 \, , 
 \quad 
 \dot \bmu_2 =0 \, , \quad 
 \dot p = \widehat{\beta_2} p
 \label{mu_2_h3_eq}
    \end{equation}
    Thus, $p$ is computed as the \emph{right} rotation around the $\boldsymbol{\beta}_1$-axis for the first step described by \eqref{mu_1_h3_eq}. The simultaneous motion of $\bmu_2$ can also be computed from \eqref{mu_1_h3_eq}, although it is a bit bulky to write explicitly. Notice that this motion does not conserve $|\bmu_2|$. Similarly, the evolution of $p$ from \eqref{mu_2_h3_eq} is the left rotation around the $\boldsymbol{\beta}_2$-axis, and the evolution of $\bmu_1$ can also be computed explicitly from \eqref{mu_2_h3_eq}. 
    \color{black} 
\end{framed}

\begin{framed}
    \color{magenta}
VP: Interestingly, if I take 
$h = (\bmu_1 + p \bmu_2) \cdot \balpha$, then with the signs we have in equations I get 
\begin{equation}
  \begin{aligned}
      \dot \bmu_1 & = - \balpha \times \bmu_1 + \balpha \times \bmu_1 
      \\
      \dot \bmu_2 & = - p^T \balpha \times \bmu_2 - p^T \balpha \times \bmu_2
      \\ 
      \dot p & = 0 
  \end{aligned}  
\end{equation}
So $p =$const. This could be an alternative to one of the to one of the steps in the LPNets. 

Note that in the two steps of CLPNets we have presented here, only the first step $h_1$ rotates $\bu$, whereas the second leaves $\bu$ invariant according to \eqref{Casimir_derivation}. 
    \color{black} 
\end{framed}
} 

\subsection{Particular case of \eqref{SO3_coupled_equations} with a common axis of rotation}\label{subsec_2}
Let us now consider the case when both rigid bodies rotate about a common axis, taken to be $\mathbf{e}_3$. This solution is possible if all charges are in the $(\mathbf{E}_1,\mathbf{E}_2)$ plane of both rigid bodies, and the axes $\mathbf{E}_3$ for both rigid bodies are aligned with the axis of rotation $\mathbf{e}_3$. While this solution is indeed a specific solution of the equations \eqref{SO3_coupled_equations}, the application of CLPNets to this problem enables an understanding of how these particular transformations act.

The matrices $g_i \in SO(3)$ are defined as rotations about the $\mathbf{e}_3$-axis: 
\begin{equation}
  g_i =   \begin{pmatrix}
  \cos \varphi_i & - \sin \varphi_i & 0 \\ 
  \sin \varphi_i &  \cos \varphi_i & 0 \\ 
  0 & 0 & 1
    \end{pmatrix} \, , \quad 
    p =  \begin{pmatrix}
  \cos \Phi & - \sin \Phi & 0 \\ 
  \sin \Phi &  \cos\Phi & 0 \\ 
  0 & 0 & 1 
    \end{pmatrix} \, ,   
    \label{p_SO2}
\end{equation}
where we have defined $\Phi:= \varphi_1 - \varphi_2$ to be the relative angle. Since the rotations are about the $\mathbf{e}_3$ axis, and $\bmu_i$ are also parallel to the $\mathbf{e}_3$ axis, the first terms on the right-hand sides of the first two equations of \eqref{SO3_coupled_equations} vanish. All the applied forces are in the $(\mathbf{e}_1,\mathbf{e}_2)$ direction, and therefore all the torques are in the $\mathbf{e}_3$ direction. Therefore, the equations of motion take the form 
\begin{equation}
\dot \mu_3^1 = T(\Phi) \, , \quad \dot \mu_3^2 = - T(\Phi), \quad \dot p = - (\widehat{\boldsymbol{\omega}_1} - \widehat{ \boldsymbol{\omega}_2}) p \, , \quad \boldsymbol{\omega}_i := \frac{1}{I_3^i} \boldsymbol{\mu}_3^i = \frac{\mu_3^i}{I_3^i} \mathbf{e}_3 \, , 
    \label{SO2_equations_init}
\end{equation}
with $T(\Phi)$ being some torque. The indices $i$ above the variable denote which variable it refers to, and the subscript $3$ denotes the $\mathbf{e}_3$ component of each variable. 
A little algebra shows that the equations can be cast into the form 
\begin{equation}
\dot \mu_3^1 = T(\Phi) \, , \quad \dot \mu_3^2 = - T(\Phi),  \quad \dot \Phi\, \mathbb{Q}  = - \left( \frac{\mu_3^1}{I_3^1} -  \frac{\mu_3^2}{I_3^2}  \right) \mathbb{Q} \, ,
    \label{SO2_equations_med}
\end{equation}
with $\mathbb{Q}=\mathbb{Q}(\Phi)$ defined as 
\begin{equation}
    \mathbb{Q}(\Phi) = \pp{p}{\Phi} = 
\begin{pmatrix} 
\sin \Phi & \cos \Phi & 0 \\ 
- \cos \Phi & \sin \Phi & 0 \\ 
0 & 0 & 0 
\end{pmatrix}\, . 
\label{dp_dphi_definitions}
\end{equation}
We can further simplify \eqref{SO2_equations_med} as 
\begin{equation}
\dot \mu_3^1 = T(\Phi) \, , \quad \dot \mu_3^2 = - T(\Phi), \quad \dot \Phi   = - \left( \frac{\mu_3^1}{I_3^1} -  \frac{\mu_3^2}{I_3^2}  \right)   \,.
    \label{SO2_equations_final}
\end{equation}
By differentiating the $\Phi$-equation of the above system once, the equations reduce to a single equation for $\Phi$ computed as 
\begin{equation}
    \ddot \Phi = - \left(  \frac{1}{I_3^1} + \frac{1}{I_3^2} \right) T( \Phi) \,.
    \label{Phi_final_eq}
\end{equation}
The rigid bodies can either rotate about the common axis if the initial momenta are high enough, or wobble around each other initially, so $\Phi$  remains bounded. We shall now explore how to find the same reduced solution in the full space using CLPNets.

\rem{ 
\todo{FGB: I need the strange signs above in $h$ to make it work. Ok? (old version was $h = \frac{\mu_1^2}{I_3^1} + \frac{\mu_2^2}{I_3^2} + U(\varphi_2-\varphi_1)$)\\ 
\textcolor{magenta}{VP: yes, true. Although because this Hamiltonian is quite strange, let us just make a comment that it canonical in $(p_\Phi, \Phi)$ variables which has a positive definite Hamiltonian, so as not to hear (well deserved) static from referees that we consider non-physical Hamiltonians. I corrected the remark below.  }}
} 

\begin{remark}[On the canonical structure of the equations]
\rem{ 
{\rm One can notice that equation \eqref{SO2_equations_final} can be written in the canonical form for $h = -\frac{\mu_1^2}{I_3^1} - \frac{\mu_2^2}{I_3^2} - U(\varphi_1-\varphi_2) $ as
\begin{equation}
    \left\{ 
    \begin{aligned} 
\dot \varphi_i & =  \pp{h}{\mu_i} \, , 
\\ 
\dot \mu_i & =  - \pp{h}{\varphi_i}   \,,
\end{aligned} 
    \right. 
    \label{Canonical_SO2}
\end{equation}
with 
}
} 
{\rm 
One could write \eqref{Phi_final_eq} in the canonical form for the variables $(\Phi, p_\Phi)$ with $p_\Phi = \dot \Phi$ and Hamiltonian $h = \frac{1}{2} p_\Phi^2 +  \Big( \frac{1}{I_3^1}+ \frac{1}{I_3^2}\Big)  U(\Phi)$, where $U'(\Phi)=T(\Phi)$.
Thus, one could use the \emph{SympNets}  \citep{jin2020sympnets} to solve the canonical form of the system \eqref{Phi_final_eq}. However, this form is not generalizable to the arbitrary dynamics of coupled Lie group systems, so we will not use it here.
}
\end{remark}

\subsection{Applications of CLPNets to a particular solution of \eqref{SO3_coupled_equations}}
Note that we are going to use the data that were generated for the general system \eqref{SO3_coupled_equations}, and not the reduced system \eqref{SO2_equations_final}, as to test the ability of our method to find particular solutions of the equations. 
\begin{enumerate} 
\item We test Hamiltonians of the type $h ( \boldsymbol{\mu}  _i)=F(x_i)$, with $ x_i =a_i \,  \bmu_i \cdot  \mathbf{E}_3$, and $F'(x) = f(x) = \alpha \sigma(x) + \gamma$ with $\sigma(x)$ being the activation function. 
Since the rotations about the $\mathbf{e}_3$ axis commute with $p$ given by \eqref{p_SO2}, the LPNets just changes the angle 
\begin{equation} 
\Phi \rightarrow \Phi + ( f(x_1) a_1-f(x_2) a_2) \Delta t \, . 
\label{Phi_change_S02}
\end{equation} 
and leaves the momenta $\mu_{1,2}$ unchanged. 
\item We consider two matrices $\mathbb{M}$ and $\mathbb{N}$ with four parameters in each matrix:
\begin{equation} 
\mathbb{M} = 
\begin{pmatrix}
M_{11} & M_{12} & 0 
\\ 
M_{21} & M_{22} & 0 
\\
0 & 0 & 0 
\end{pmatrix}\, , \quad 
\mathbb{N} = 
\begin{pmatrix}
N_{11} & N_{12} & 0 
\\ 
N_{21} & N_{22} & 0 
\\
0 & 0 & 0 
\end{pmatrix}
\label{M_N_def} 
\end{equation}
and the Hamiltonian: 
\begin{equation}
h(p)  = \sum_{i,j} 
M_{ij}  F(p_{i j}) + N_{ij} p_{ i j} \,,
    \label{Hamiltonian_SO2_p}
\end{equation}
which is a particular case of \eqref{h_p_LPNets} with a restricted form for $M_{ij}$ and $N_{ij}$. Here $F'(x)=\sigma(x)$, with $\sigma(x)$ being the chosen activation function, and thus
\[ 
\pp{h}{p_{ij}} = M_{ij} \sigma(p_{ij}) + N_{ij}\,.
\]   
The evolution equation for this Hamiltonian leaves  $p$ unchanged and modifies $\mu_{1,2}$ as follows:
\begin{equation}
    \mu_1 \rightarrow \mu_1 + \left( \pp{h}{p} p^\mathsf{T} \right)^\vee \Delta t \, , \quad 
    \mu_2 \rightarrow \mu_2 - \left( \pp{h}{p} p^\mathsf{T} \right)^\vee \Delta t , \quad 
    p \rightarrow p \, . 
    \label{mu_change_SO2}
\end{equation}
Thus, the adjustment of the value of momenta $\mu_{1,2}$ occurs in this step. 

\end{enumerate}
Note that the Casimir \eqref{Casimir_expression} is, in this case, given simply as $C = \mu_1 + \mu_2$ and is conserved to machine precision. 

We present the results of the simulations of the vertical components of momenta $\bmu_i \cdot \mathbf{E}_3$ and the meaningful components of the variables $p$ in Figure~\ref{fig:momenta_orientation_SO2}, for a simulation over 5000 steps ($t=500$ with $h=0.1$). We used only 20 trajectories with 51 time steps each, \emph{i.e.}, 1000 data pairs, to learn the dynamics of the whole phase space. The initial conditions for $\mu_{1,3}$, $\mu_{2,3}$ and the initial rotation angle for $p$ are chosen from a uniform random distribution in the interval $(-2,2)$ for the angular momenta and $(0,2 \pi)$ for the angle. The initial trajectory for testing is chosen with initial momenta taken from a uniform distribution in the interval $(-1,1)$ and an initial angle of relative rotation in the interval $(-\pi/2, \pi/2)$. The description of each rotation by the network involves three parameters. The terms in $(M_{i,j},N_{i,j})$ involve eight parameters, so each cycle $1 \rightarrow 2\rightarrow 3$ contains 14 parameters. A network consisting of three repeated applications of these transformations involves 42 parameters. At the beginning of the learning procedure, all parameters are initialized randomly with a uniform distribution in the interval $(-1,1)$.  
\begin{figure}
    \centering
    \includegraphics[width = 1 \textwidth] {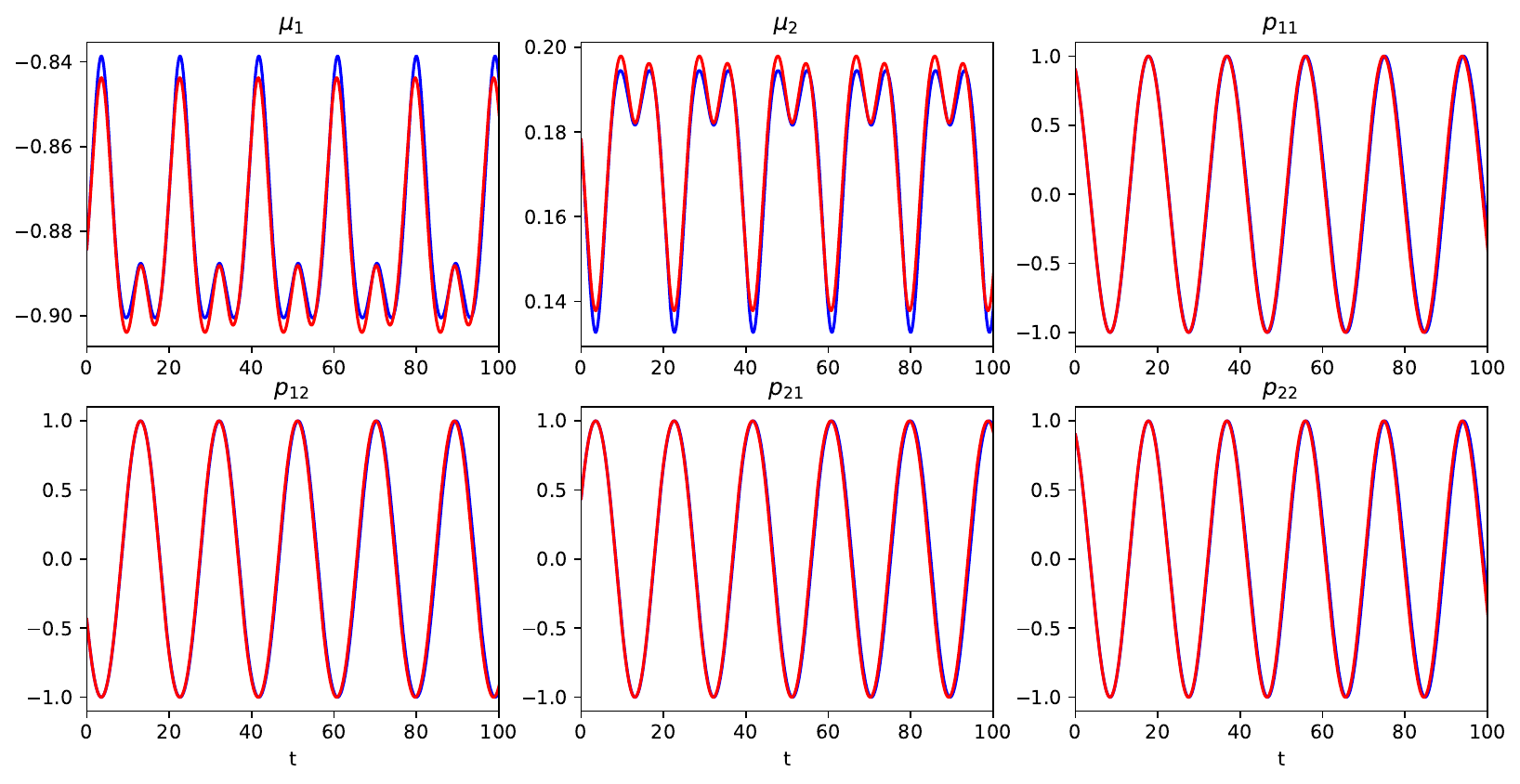}
    \caption{The results of simulations of equations \eqref{SO3_coupled_equations} in the reduced case of $SO(2)$ rotations about the vertical axis. Blue lines: ground truth; red lines: predictions provided by CLPNets.}
    \label{fig:momenta_orientation_SO2}
\end{figure}
In Figure~\ref{fig:Casimir_error_SO2}, we present the comparison of the Casimir from our solution vs the ground truth, the value of the energy, and the value of the mean absolute error. The Casimir is conserved to machine precision in both our case and the ground truth case obtained by the BDF algorithms. It is interesting that since the Casimir is a linear function, namely $C=\mu_1+\mu_2 $, it also happens to be conserved by the BDF algorithm. However, this is an exceptional case and is not true in general. The energy, while not being conserved as well as in the ground truth, remains oscillatory and is preserved well on average.
\begin{figure}
    \centering
    \includegraphics[width = 1 \textwidth]{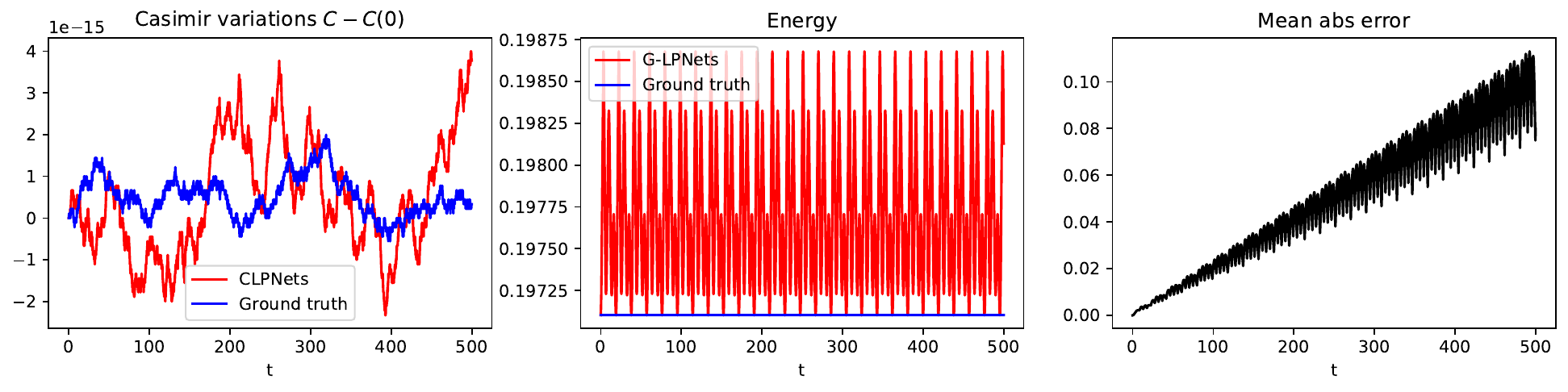}
    \caption{The comparison of the Casimirs (left), energy (center), as well as the computation of the Mean Absolute Error (right) for all components for the simulations presented on Figure~\ref{fig:momenta_orientation_SO2}. The long-term energy conservation for the neural network, although oscillatory, conserves the energy on average with high precision. }
    \label{fig:Casimir_error_SO2}
\end{figure}


\subsection{General dynamics for couple $SO(3)$ groups using CLPNets}

The full three-dimensional motion described by equations \eqref{SO3_coupled_equations} is not integrable and requires the full capability of our network, as defined by the substeps  \eqref{step1_SO3}, \eqref{step2_SO3} and \eqref{step3_SO3}. 
\begin{enumerate}
    \item For the $\bmu_i$-rotation, we utilize rotations around the axis $j$ with the the Hamiltonian 
    \begin{equation}
    h_{i,j}(\bmu_i) = f_{i,j}( \mu_{i,j} ) =  f_{i,j}( \bmu_i \cdot \mathbf{e}_j) \, , \quad 
    \pp{h_{i,j}}{\bmu_i} =  f_{i,j}'(\bmu_i \cdot \mathbf{e}_j)\mathbf{e}_j \, , 
    \label{h_three_dim_LPNets}
    \end{equation}
where $f'_{i,j}(x) =  \alpha_{i,j} \sigma(x)+ \gamma_{i,j}$, with $\sigma(x)$ being the activation function. The motion is simply the rotation about $\mathbf{e}_j$-axis by the angle $f'_{i,j}(x) \Delta t$. 
    \item For the $p$-rotation, we take 
    \begin{equation}
        h(p) = \sum_{i,j,k} M_{ij}f(p_{ij}) + N_{ij} p_{ij} \, , \quad 
        \pp{h}{p_{ij}} = \sum_k M_{ij}  \sigma(p_{ij}) + N_{ij} \,,
        \label{h_p_LPNets_example}
    \end{equation}
\end{enumerate}
where, as usual, $f'(x) = \sigma(x)$ and $\sigma(x)$ is the activation function.
The CLPNets are applied in the following sequence: first, $h(\bmu_1)$ as given by \eqref{h_three_dim_LPNets}, then $h(\bmu_2)$ using the same formula, and finally $h(p)$ as described by \eqref{h_p_LPNets}. This sequence of cycles is repeated $N=3$ times at each time step.  

The learning is obtained from 20 trajectories of 51 points each, corresponding to 1000 data pairs. Note that this is exactly the same number of data points as the reduced $SO(2)$ case described above; however, the $SO(2)$ case is effectively $3$-dimensional, whereas the case of full $SO(3)$ coupled rotations is $9$-dimensional. The initial conditions for these trajectories $\bmu_{1,2}^0$ are chosen at random in a cube of size $4$, so $\mu_{k,i}^0$, $k=1,2$ and $j=1,2,3$ are each taken from a random distribution on the interval $(-2,2)$, while the corresponding initial value of $p_0$ is obtained as a rotation matrix about a random unit vector, with an angle $\varphi$ randomly chosen in the interval $\varphi \in [- \pi/2, \pi/2]$. The total number of parameters of the network in this setting is 108. At the beginning of the learning procedure, all parameters are initialized randomly with a uniform distribution in the interval $(-0.1,0.1)$.  The Adam algorithm is performed for optimization. The loss, taken to be MSE of all components, is reduced from about $3 \cdot 10^{-2}$ to $10^{-6}$ after 2000 epochs in the realization shown. The initial learning rate is $1$, decreasing exponentially to $0.1$. The results of simulations and their comparison with the ground truth are 
shown in Figure~\ref{fig:momenta_orientation_SO3}. 
\begin{figure}
    \centering
    \includegraphics[width = 1 \textwidth] {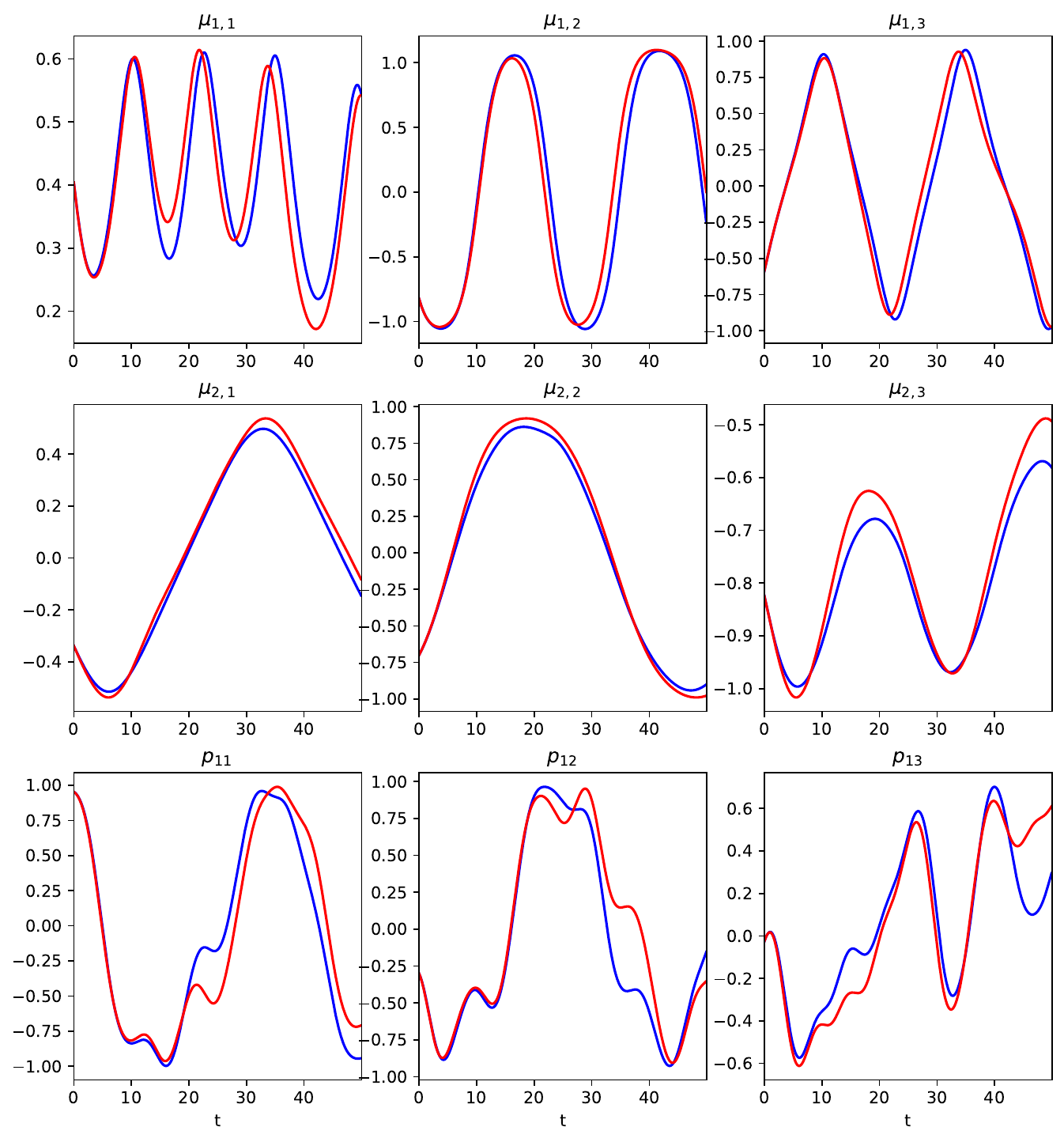}
    \caption{The results of simulations of equations \eqref{SO3_coupled_equations} in the general case of $SO(3)$ rotations. As before, blue lines are the ground truth; red lines are predictions by CLPNets. All components of angular momenta $\bmu_{1,2}$ and three components of orientation matrix $p_{1k}$, $k=1,2,3$ are shown. }
    \label{fig:momenta_orientation_SO3}
\end{figure}
The conservation of Casimirs and energy by the ground truth simulation vs our data is shown on the left and central panels of Figure~\ref{fig:Casimir_error_SO3}, respectively. Note that the Casimir is preserved by our method with machine precision, better than the ground truth. In contrast, the energy is conserved less accurately than the ground truth. For some initial conditions, the preservation of energy is quite accurate on average, although in the case presented here, the error in energy grows approximately linearly in time. Since the solution of equations \eqref{SO3_coupled_equations} are generally chaotic, the distance $d$ between nearby trajectories that are initially close diverges exponentially with time  $d \sim e^{\lambda t}$, where $\lambda$ is the (leading) Lyapunov exponent. We measure the value of the Lyapunov exponent $\lambda$ from direct simulations. The growth of error proportional to $e^{\lambda t}$ is the lowest one can expect in chaotic systems. The growth of mean absolute error (MAE) is demonstrated on the right panel of Figure~\ref{fig:Casimir_error_SO3} as a solid black line, along with the dashed line showing exponential growth according to the Lyapunov exponent $e^{\lambda t}$. 
\begin{figure}
    \centering
    \includegraphics[width = 1 \textwidth]{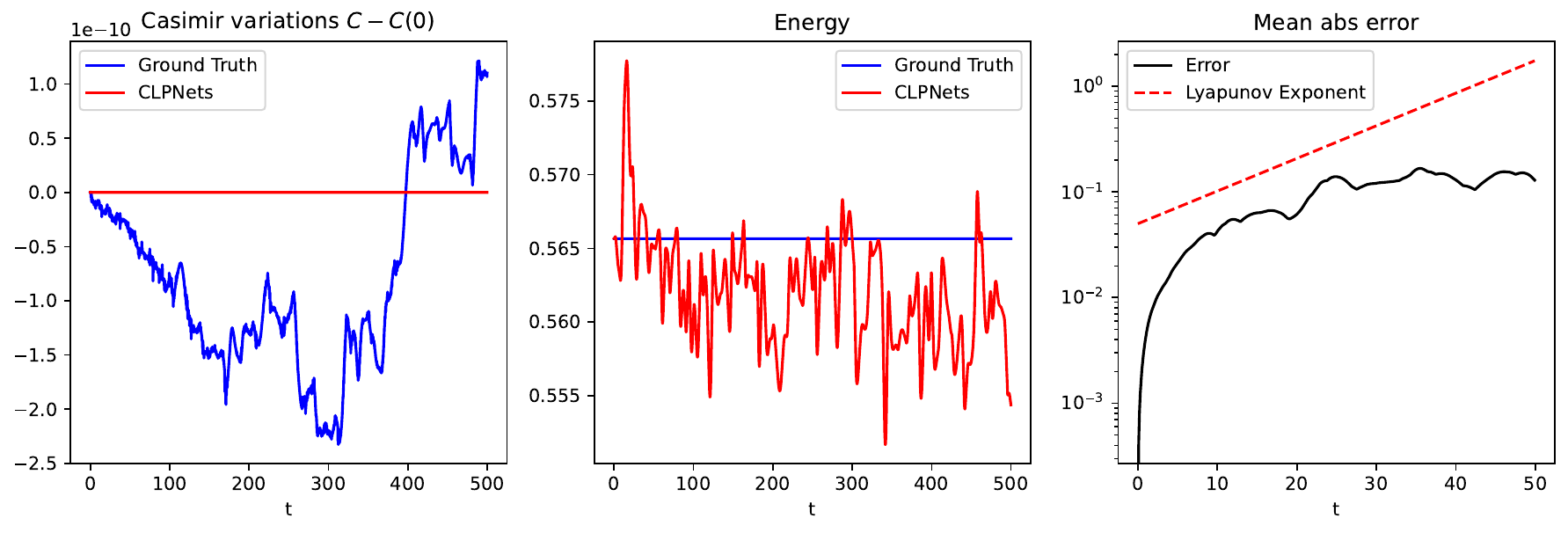}
    \caption{The comparison of the Casimirs (left), energy (center), and the Mean Absolute Error for all components of momenta and three components of $p_{ij}$ for the simulations presented on Figure~\ref{fig:momenta_orientation_SO3}. The energy and Casimir are shown over all the time interval computed (5000 data points, or $t=500$). The MAE on the right panel is computed only for the time interval corresponding to $t=50$ (500 data points). The growth of errors corresponding to the Lyapunov exponent on the right panel is demonstrated with a dashed red line. }
    \label{fig:Casimir_error_SO3}
\end{figure}

\subsection{Results for more general Hamiltonians}


In the Hamiltonian \eqref{SO3_coupled_main}, we used the functional form of the sum of squares of momenta, dictated by physical arguments. To demonstrate that our method is applicable to general Hamiltonians, we simulated the equations with an altered (unphysical) Hamiltonian $h_{\rm alt} = \frac{1}{\zeta}\tanh \zeta h$, where $\zeta$ is a parameter and $h$ is given by \eqref{SO3_coupled_main}. 
Note that each individual trajectory of this new system is obtained by a rescaling in the original system \eqref{SO3_coupled_equations}. However, the time scale depends on the value of that trajectory's Hamiltonian, so the phase flow mapping in the phase space deforms compared to the original system, while still remaining a Poisson transformation. The Casimir is, of course, independent of the Hamiltonian and is still given by the expression \eqref{Casimir_expression}. We present the results of our simulations for $\zeta =0.1$, showing the values of momenta and orientations in Figure \ref{fig:momenta_orientation_SO3_exponentiated}, and the conservation of the Casimir, energy, and error growth in Figure \ref{fig:Casimir_error_SO3_exponentiated}. The results for $\zeta=0.1$ are very similar to those shown in  Figures~\ref{fig:momenta_orientation_SO3} and \ref{fig:Casimir_error_SO3}, with the loss converging to small values, typically about  $5 \cdot 10^{-6}$, within 2000 epochs. This demonstrates that our algorithm can also handle more complex functional forms of the Hamiltonians.
For larger $\zeta$, such as $\zeta=1$, the convergence appears to worsen, with the loss, while decaying, remaining at around $5 \cdot 10^{-5}$ after 2000 epochs, which is about ten times worse than for smaller values of $\zeta$. We attribute this loss of accuracy to large variations in the potential energy terms in \eqref{SO3_coupled_main} due to the dynamics of electrostatic charges. These variations cause significant slowdowns or speedups in solutions with the Hamiltonian $h_{\rm alt} = \frac{1}{\zeta} \tanh \zeta h$ compared to the original Hamiltonian in \eqref{SO3_coupled_main}. This discrepancy in time scales is challenging to capture with the regular activation functions considered here. Due to the lack of physical relevance of this type of Hamiltonian and the challenging nature of the question of capturing dynamics in phase space with different time scales using a neural network, we do not pursue this line of thought further here and defer the discussion to future work on the subject.
\begin{figure}
    \centering
    \includegraphics[width = 1 \textwidth] {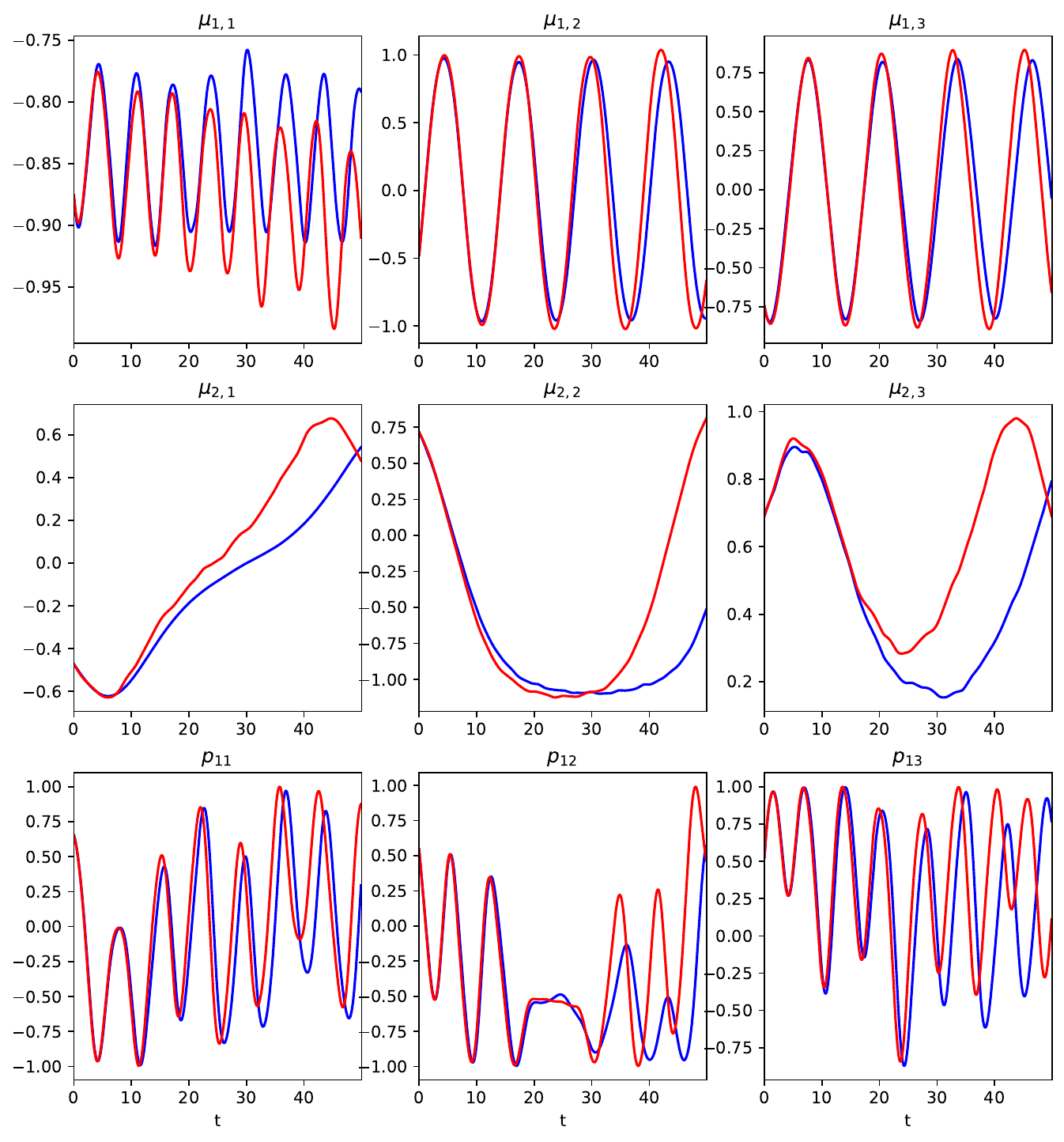}
    \caption{The results of simulations of modified equations \eqref{SO3_coupled_equations} in the general case of $SO(3)$ rotations and $h_{\rm alt} =\frac{1}{\zeta} \tanh {\zeta h}$, where $\zeta = 0.1$ and $h$ is given by \eqref{SO3_coupled_main}. As before, blue lines are the ground truth; red lines are predictions by CLPNets. All components of angular momenta $\bmu_{1,2}$ and three components of orientation matrix $p_{1k}$, $k=1,2,3$ are shown. }
    \label{fig:momenta_orientation_SO3_exponentiated}
\end{figure}

\begin{figure}
    \centering
    \includegraphics[width = 1 \textwidth]{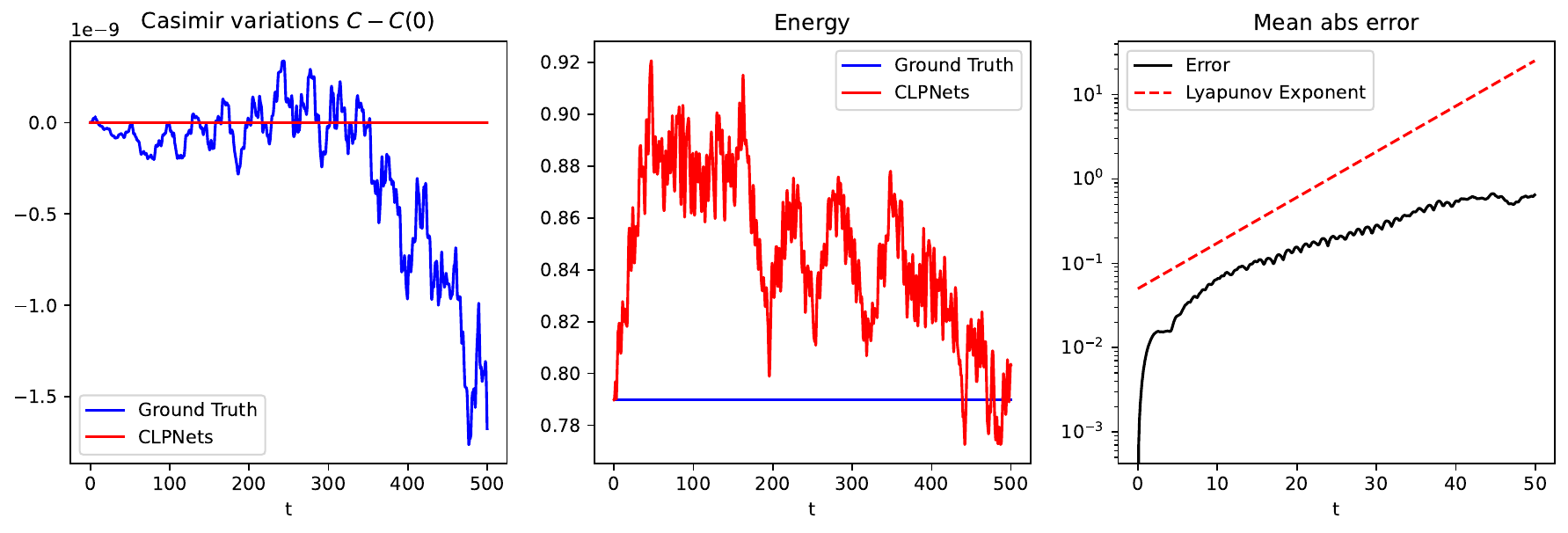}
    \caption{The comparison of the Casimirs (left), energy (center), and the Mean Absolute Error for all components of momenta and three components of $p_{ij}$ for the simulations presented on Figure~\ref{fig:momenta_orientation_SO3_exponentiated}. As before, the energy and Casimir values are shown over all the time interval computed (5000 data points, or $t=500$). The MAE on the right panel is computed only for the time interval corresponding to $t=50$ (500 data points). The growth of errors corresponding to the Lyapunov exponent on the right panel is demonstrated with a dashed red line. }
    \label{fig:Casimir_error_SO3_exponentiated}
\end{figure}

\section{Coupled systems on $SE(3)$: rotational and translational dynamics}
\label{sec_SE3}

\subsection{Lie group definitions, operators and equations of motion}
We now proceed to the consideration of equations of motion for the Lie group of rotations and translations, which is important for the computation of discretization of elastic bodies, such as geometrically exact rods (\cite{demoures2015discrete,gay2016fluid_tube}). This case is the most complex example we will consider. 

We start with some definition of the semidirect product group of rotations and translations $SE(3)$. For a rotation $\mathbb{Q} \in SO(3)$ and translation $\mathbf{v} \in \mathbb{R}^3$, we form a $4 \times 4$ matrix $g \in SE(3)$, with corresponding multiplication and inverse 
\begin{equation}
    g = 
    \left(
    \begin{array}{cc}
    \mathbb{Q} & \mathbf{v} \\
    \mathbf{0}^\mathsf{T} & 1 
    \end{array} 
    \right) 
    \, , \quad 
        g_1 g_2  = 
    \left(
    \begin{array}{cc}
    \mathbb{Q}_1\mathbb{Q}_2 & \mathbb{Q}_1 \mathbf{v}_2 + \mathbf{v}_1 \\
    \mathbf{0}^\mathsf{T} & 1 
    \end{array} 
    \right) 
    \, , \quad 
     g^{-1} = 
    \left(
    \begin{array}{cc}
    \mathbb{Q}^\mathsf{T} & - \mathbb{Q}^\mathsf{T} \mathbf{v} \\
    \mathbf{0}^\mathsf{T} & 1 
    \end{array} 
    \right) \,, 
    \label{SE3_def}
\end{equation} 
where $\mathbf{0}^\mathsf{T}$ is the row of zeros. The momenta of $SE(3)$ are six-dimensional and consist of three-dimensional angular momenta $\balpha$ and linear momenta $\bbeta$. Thus, the evolution equation in the Hamiltonian representation contains the variables $(\balpha_{1,2},\bbeta_{1,2}) \in \mathfrak{se}(3)^*$ and the relative orientation and position $(\mathbb{Q}, \mathbf{v}) \in SE(3)$. The equations of motion are derived from the general expression \eqref{hamiltonian_eqs_two_groups} in \ref{sec_derivation_SE3}. They are given as follows: 
\begin{equation}
\left\{ 
\begin{aligned}
 \dot{\boldsymbol{\alpha}}_1 & = - \pp{h}{\boldsymbol{\alpha}_1}  \times \boldsymbol{\alpha}_1 
 - \pp{h}{\boldsymbol{\beta}_1}  \times \boldsymbol{\beta}_1  + \left( \pp{h}{\mathbb{Q}} \mathbb{Q}^\mathsf{T}\right)^\vee + \mathbf{v} \times \pp{h}{\mathbf{v}}
 \\ 
 \dot{\boldsymbol{\beta}}_1 & = - \pp{h}{\boldsymbol{\alpha}_1}  \times \boldsymbol{\beta}_1 
  + \pp{h}{\mathbf{v}}
 \\ 
  \dot{\boldsymbol{\alpha}}_2 & = - \pp{h}{\boldsymbol{\alpha}_2}  \times \boldsymbol{\alpha}_2
 - \pp{h}{\boldsymbol{\beta}_2}  \times \boldsymbol{\beta}_2 - \left(  \mathbb{Q}^\mathsf{T}\pp{h}{\mathbb{Q}} \right)^\vee 
 \\ 
 \dot{\boldsymbol{\beta}}_2 & = - \pp{h}{\boldsymbol{\alpha}_2}  \times \boldsymbol{\beta}_2
  - \mathbb{Q}^\mathsf{T}  \pp{h}{\mathbf{v}}
\\
\dot{\mathbb{Q}} &  = - \left( \pp{h}{\boldsymbol{\alpha}_1} \right)^\vee \mathbb{Q} + \mathbb{Q} \left( \pp{h}{\boldsymbol{\alpha}_2} \right)^\vee
\\
\dot{\mathbf{v}} & =-  \pp{h}{\boldsymbol{\alpha}_1} 
\times \mathbf{v} -
 \pp{h}{\boldsymbol{\beta_1}}+ \mathbb{Q} \pp{h}{\boldsymbol{\beta_2}}\,.
\end{aligned}
\right. 
\label{SE3_equations_explicit}
\end{equation}
We choose the following Hamiltonian $h$ to compute the training data for \eqref{SE3_equations_explicit}: 
\begin{equation}
\begin{aligned}
h &  = \frac{1}{2} \balpha_1 \cdot \mathbb{I}_1^{-1} \balpha_1 
+
\frac{1}{2 m_1}| \bbeta_1|^2 
+ 
\frac{1}{2} \balpha_2 \cdot \mathbb{I}_2^{-1} \balpha_2
\\ 
& \qquad + 
\frac{1}{2 m_2}| \bbeta_2|^2 
+ \frac{1}{2} \operatorname{Tr} \big( \left( \mathbb{Q}-{\rm Id}_3 \right)^\mathsf{T} \mathbb{P} \left( \mathbb{Q}-{\rm Id}_3 \right) \big) + 
\frac{1}{2} \mathbf{v} \cdot \mathbb{L} \mathbf{v} \,,
\end{aligned} 
    \label{SE3_Hamiltonian_Explicit}
\end{equation}
with the tensors of inertia $\mathbb{I}_1 = {\rm diag}(1,2,3)$, 
$\mathbb{I}_2 = {\rm diag}(2,3,4)$, the masses $m_{1,2}=1$, and the tensors $\mathbb{P} = {\rm diag}(1,2,3)$ and $\mathbb{L} = {\rm diag}(1,2,3)$ associated with the potential energy. 

\subsection{CLPNets for $SE(3)$ dynamics}

For the Lie group $SE(3)$, the algorithm defined in Section~\ref{sec:general_LPNets} proceeds via the following 6 steps.
\begin{enumerate}
    \item[{\bf Step 1.}] Take the Hamiltonian to be $h_1^\alpha = f_1 ( \mathbf{a}_1 \cdot \boldsymbol{\alpha}_1)$, for $  \mathbf{a}_1 \in \mathbb{R} ^3 $. One can readily see that 
    $\mathbf{a}_1 \cdot \boldsymbol{\alpha_1}=$ const. Denoting, as usual, by $\mathbb{R}(\mathbf{a},\varphi)$ the rotation matrix with unit axis $\mathbf{a}$ and angle $\varphi$, the equations of motion are 
\begin{equation}
\left\{ 
\begin{aligned}
 \dot{\boldsymbol{\alpha}}_1 & = - f_1'(\mathbf{a}_1 \cdot \boldsymbol{\alpha}_1) \, \mathbf{a}_1  \times \boldsymbol{\alpha}_1  \quad \Rightarrow \quad \boldsymbol{\alpha}_1 = \mathbb{R}(\mathbf{a}_1,\omega_1 t) \,\boldsymbol{\alpha}_1(0) 
 \\ 
 \dot{\boldsymbol{\beta}}_1 & = - f_1'(\mathbf{a}_1 \cdot \boldsymbol{\alpha}_1) \, \mathbf{a}_1  \times \boldsymbol{\beta}_1    \quad
 \Rightarrow \quad \boldsymbol{\beta}_1 = \mathbb{R}(\mathbf{a}_1, \omega_1t) \,\boldsymbol{\beta}_1(0) 
 \\ 
  \dot{\boldsymbol{\alpha}_2} & = 0  \quad\Rightarrow\quad 
  \boldsymbol{\alpha}_2 = \boldsymbol{\alpha}_2(0)
 \\ 
 \dot{\boldsymbol{\beta}_2} & = 0 \quad
 \Rightarrow\quad 
  \boldsymbol{\beta}_2 = \boldsymbol{\beta}_2(0)
\\
\dot{\mathbb{Q}} &  = - f_1'(\mathbf{a}_1 \cdot \boldsymbol{\alpha}_1) \,\widehat{ \mathbf{a}_1 }\mathbb{Q}  \quad\Rightarrow \quad \mathbb{Q} = \mathbb{R}(\mathbf{a}_1, \omega_1 t) \,\mathbb{Q}(0) 
\\
\dot{\mathbf{v}} & =-f_1'(\mathbf{a}_1 \cdot \boldsymbol{\alpha}_1) \,\mathbf{a}_1 
\times \mathbf{v} \quad \Rightarrow \quad \mathbf{v} = \mathbb{R}(\mathbf{a}_1,\omega_1 t) \,\mathbf{v} (0)\,. 
\end{aligned}
\right. 
\label{SE3_equations_step1}
\end{equation}
Here we defined as before $\omega_1=f_1'(\mathbf{a}_1 \cdot \balpha) | \mathbf{a}_1|$.

{\bf Practical implementation:} On each step, we choose three subsequent transformations: $\mathbf{a}_{1,i} = q_{1,i} \mathbf{e}_i$, where $\mathbf{e}_i$, $i=1,2,3$ are the basis vectors in the $\balpha_1$ space. Again, just as in \eqref{step1_SO3}, the rotation matrices in \eqref{SE3_equations_step1} are simply rotations about the basis axes $\mathbf{e}_i$ in the $\balpha_2$ space, with $i=1,2,3$. Each of the transformations corresponds to the choice of Hamiltonians 
    \begin{equation}
    h_{1,i}^\alpha = f_{1,i} (q _{1,i} \alpha_{1,i}) \, , \quad \mbox{with} \quad \pp{h_{1,i}^\alpha}{\balpha_1} =  \left( s_{1,i} \sigma(q_{1,i} \alpha_{1,i}) + \gamma_{1,i} \right)\mathbf{e}_i \, , 
        \label{h1_practical_SE3}
    \end{equation}
    where $s_{1,i}$, $q_{1,i}$, and $\gamma_{1,i}$ are parameters to be determined and $\sigma(x)$ is the activation function.

    \item[{\bf Step 2.}] Take the Hamiltonian to be $h_1^\beta = g_1 ( \mathbf{b}_1 \cdot \boldsymbol{\beta}_1)$, for $ \mathbf{b}_1 \in \mathbb{R} ^3  $. One can again readily see that 
    $\mathbf{b}_1 \cdot \boldsymbol{\beta}_1=$ const. The equations of motion for this step are 
\begin{equation}
\left\{ 
\begin{aligned}
 \dot{\boldsymbol{\alpha}}_1 & = - g_1' ( \mathbf{b}_1 \cdot \boldsymbol{\beta}_1) \, \mathbf{b}_1  \times \boldsymbol{\beta}_1 \, \quad \Rightarrow \quad \boldsymbol{\alpha}_1 =  - g_1' ( \mathbf{b}_1 \cdot \boldsymbol{\beta}_1) \,\mathbf{b}_1  \times \boldsymbol{\beta}_1 t + \boldsymbol{\alpha}(0) 
 \\ 
 \dot{\boldsymbol{\beta}}_1 & = \mathbf{0}     \quad
 \Rightarrow \quad \boldsymbol{\beta}_1 =  \boldsymbol{\beta}(0) 
 \\ 
  \dot{\boldsymbol{\alpha}_2} & =  \mathbf{0}  \quad\Rightarrow\quad 
  \boldsymbol{\alpha}_2 = \boldsymbol{\alpha}_2(0)
 \\ 
 \dot{\boldsymbol{\beta}_2} & = \mathbf{0} \quad
 \Rightarrow\quad 
  \boldsymbol{\beta}_2 = \boldsymbol{\beta}_2(0)
\\
\dot{\mathbb{Q}} &  = 0  \quad\Rightarrow \quad \mathbb{Q} =  \mathbb{Q}(0) 
\\
\dot{\mathbf{v}} & =g_1' ( \mathbf{b}_1 \cdot \boldsymbol{\beta}_1) \mathbf{b}_1 
 \quad \Rightarrow \quad \mathbf{v} = g_1' ( \mathbf{b}_1 \cdot \boldsymbol{\beta}_1) \mathbf{b}_1 t + \mathbf{v} (0) \,.
\end{aligned}
\right. 
\label{SE3_equations_step2}
\end{equation}

{\bf Practical implementation:} Similarly to Step 1, we take three sequential sub-steps with the choice of vectors $\mathbf{b}_{1,i} = v_{1,i}\mathbf{e}_i $. Then, we choose
\[
h_{1,i}^ \beta = g_{1,i}(v_{1,i} \beta _{1,i}), \quad \text{with} \quad \frac{\partial h_{1,i}^ \beta }{\partial \boldsymbol{\beta} _1}= ( u_{1,i} \sigma( v_{1,i} \beta_{1,i}) + w_{1,i} ) \,\mathbf{e}_i\,,
\]
where $u_{1,i}$, $v_{1,i}$ and $ w_{1,i}$ are parameters to be determined and $\sigma(x)$ is the activation function. 

    \item[{\bf Step 3.}] Take the Hamiltonian to be $h_2 ^\alpha= f_2 ( \mathbf{a}_2 \cdot \boldsymbol{\alpha}_2)$, for $ \mathbf{a}_2 \in \mathbb{R} ^3  $. The equations of motion are 
\begin{equation}
\left\{ 
\begin{aligned}
  \dot{\boldsymbol{\alpha}_1} & = \mathbf{0}  \quad\Rightarrow\quad 
  \boldsymbol{\alpha}_1 = \boldsymbol{\alpha}_1(0)
 \\ 
 \dot{\boldsymbol{\beta}_1} & = \mathbf{0} \quad
 \Rightarrow\quad 
  \boldsymbol{\beta}_1 = \boldsymbol{\beta}_1(0)
\\
 \dot{\boldsymbol{\alpha}}_2 & = - f_2'(\mathbf{a}_2 \cdot \balpha_2)\,\mathbf{a}_2  \times \boldsymbol{\alpha}_2 \quad \Rightarrow \quad \boldsymbol{\alpha}_2 = \mathbb{R}(\mathbf{a}_2, \omega_2 t) \,\boldsymbol{\alpha_2}(0) 
 \\ 
 \dot{\boldsymbol{\beta}}_2 & = - f_2'(\mathbf{a}_2 \cdot \balpha_2)\,\mathbf{a}_2  \times \boldsymbol{\beta}_2    \quad
 \Rightarrow \quad \boldsymbol{\beta}_2 = \mathbb{R}(\mathbf{a}_2,\omega_2 t) \,\boldsymbol{\beta}_2(0) 
 \\ 
\dot{\mathbb{Q}} &  = f_2'(\mathbf{a}_2 \cdot \balpha_2) \,\mathbb{Q} \, \widehat{ \mathbf{a}_2} \quad\Rightarrow \quad \mathbb{Q}  =  \mathbb{Q}(0) \,\mathbb{R}(\mathbf{a}_2, \omega_2t)^\mathsf{T}
\\
\dot{\mathbf{v}} & = \mathbf{0}  \quad  \Rightarrow \quad \mathbf{v} = \mathbf{v} (0) \,,
\end{aligned}
\right. 
\label{SE3_equations_step3}
\end{equation}
with $\omega_2 = f_2'(\mathbf{a}_2 \cdot \balpha_2)$.

{\bf Practical implementation:} Similarly to Step 1,  we choose three subsequent transformations: $\mathbf{a}_{2,i} = q_{2,i} \mathbf{e}_i$, where $\mathbf{e}_i$, $i=1,2,3$ are the basis vectors in the $\boldsymbol{\alpha}_2$ space. Then, $\mathbf{a}_{2,i} \cdot \balpha_2 = q_{2,i} \alpha_{2,i}$. Again, just as in \eqref{step1_SO3}, the rotation matrices in \eqref{SE3_equations_step1} are simply rotations about the basis axes $\mathbf{e}_i$ in the $\balpha_1$ space, with $i=1,2,3$. Each of the transformations corresponds to the choice of Hamiltonians 
    \begin{equation}
    h_{2,i}^\alpha  = f_{2,i} (q_{2,i} \alpha_{2,i}) \, , \quad \mbox{with} \quad \pp{h_{2,i}^ \alpha }{\balpha_1} =  \left( s_{2,i} \sigma(q_{2,i} \alpha_{2,i}) + \gamma_{2,i} \right)\mathbf{e}_i \, , 
        \label{h2_practical_SE3}
    \end{equation}
    where $s_{1,i}$, $q_{1,i}$ and $\gamma_{1,i}$ are parameters to be determined and $\sigma(x)$ is the activation function.
    
    \item[{\bf Step 4.}] Take the Hamiltonian to be $h_2 ^\beta = g_2( \mathbf{b}_2 \cdot \boldsymbol{\beta}_2)$, for $ \mathbf{b}_2 \in \mathbb{R} ^3$. The equations of motion are 
\begin{equation}
\left\{ 
\begin{aligned}
  \dot{\boldsymbol{\alpha}}_1 & =  \mathbf{0}  \quad\Rightarrow\quad 
  \boldsymbol{\alpha}_1 = \boldsymbol{\alpha}_1(0)
 \\ 
 \dot{\boldsymbol{\beta}}_1 & = \mathbf{0} \quad
 \Rightarrow\quad 
  \boldsymbol{\beta}_1 = \boldsymbol{\beta}_1(0)
\\
 \dot{\boldsymbol{\alpha}}_2 & = -  g_2' ( \mathbf{b}_2 \cdot \boldsymbol{\beta}_2)\,  \mathbf{b}_2  \times \boldsymbol{\beta}_2  \quad \Rightarrow \quad \boldsymbol{\alpha}_2 =  -  g_2' ( \mathbf{b}_2 \cdot \boldsymbol{\beta}_2)\, \mathbf{b}_2  \times \boldsymbol{\beta}_2 t + \boldsymbol{\alpha}_2(0) 
 \\ 
 \dot{\boldsymbol{\beta}}_2 & = \mathbf{0}  \quad
 \Rightarrow \quad \boldsymbol{\beta}_2 =  \boldsymbol{\beta}_2(0) 
 \\ 
\dot{\mathbb{Q}} &  = 0  \quad\Rightarrow \quad  \mathbb{Q} =  \mathbb{Q}(0) 
\\
\dot{\mathbf{v}} & = g_2' ( \mathbf{b}_2 \cdot \boldsymbol{\beta}_2)\,  \mathbb{Q} \mathbf{b}_2 \quad \Rightarrow \quad \mathbf{v} =   g_2' ( \mathbf{b}_2 \cdot \boldsymbol{\beta}_2)\, \mathbb{Q} \mathbf{b}_2 t + \mathbf{v} (0) \,.
\end{aligned}
\right. 
\label{SE3_equations_step4}
\end{equation}

{\bf Practical implementation:} Similarly to Step 2, we take three sequential sub-steps with $\mathbf{b}_{2,i} = v_{2,i}\mathbf{e}_i $.  Then, we choose
\[
h_{2,i}^ \beta = g_{2,i}(v_{2,i} \, \beta _{2,i}), \quad \text{with} \quad \frac{\partial h_{2,i}^ \beta }{\partial \boldsymbol{\beta} _2}= ( u_{2,i} \sigma( v_{2,i} \, \beta_{2,i}) + w_{2,i} ) \,\mathbf{e}_i\,,
\]
where $u_{2,i}$, $v_{2,i}$ and $ w_{2,i}$ are parameters to be determined and $\sigma(x)$ is the activation function. 

\item[{\bf Step 5.}] Take the Hamiltonian to be $h_5=h_5(\mathbb{Q})$. The equations of motion are
\begin{equation}
\left\{ 
\begin{aligned}
 \dot{\boldsymbol{\alpha}}_1 & = \left( \pp{h}{\mathbb{Q}} \mathbb{Q}^\mathsf{T}\right)^\vee  
 \quad\Rightarrow\quad 
  \boldsymbol{\alpha}_1 = \left( \pp{h}{\mathbb{Q}} \mathbb{Q}^\mathsf{T}\right)^\vee  t + \boldsymbol{\alpha}_1(0)
 \\ 
 \dot{\boldsymbol{\beta}}_1 & = \mathbf{0} 
 \quad\Rightarrow\quad 
  \boldsymbol{\beta}_1 =  \boldsymbol{\beta}_1(0)
 \\ 
  \dot{\boldsymbol{\alpha}}_2 & =  - \left(  \mathbb{Q}^\mathsf{T}\pp{h}{\mathbb{Q}} \right)^\vee 
  \quad \Rightarrow \quad  \boldsymbol{\alpha}_2 = -\left(  \mathbb{Q}^\mathsf{T}\pp{h}{\mathbb{Q}} \right)^\vee   t + \boldsymbol{\alpha}_2(0) 
 \\ 
 \dot{\boldsymbol{\beta}}_2 & = 0  
  \quad\Rightarrow\quad 
  \boldsymbol{\beta}_2 =  \boldsymbol{\beta}_2(0)
\\
\dot{\mathbb{Q}} &  = 0 \quad 
\Rightarrow \quad \mathbb{Q} =\mathbb{Q}(0) 
\\
\dot{\mathbf{v}} & = \mathbf{0} \quad 
\Rightarrow \quad \mathbf{v} = \mathbf{v}(0) \,.
\end{aligned}
\right. 
\label{SE3_equations_step5}
\end{equation}
{\bf Practical implementation:} For $h_5(\mathbb{Q})$, we choose the neural network parameterization exactly as in \eqref{h_p_LPNets_example}, since we are describing the rotational part of the $SE(3)$ group.
\item[{\bf Step 6.}] Take the Hamiltonian to be $h_6=h_6(\mathbf{v})$. The equations of motion are
\begin{equation}
\left\{ 
\begin{aligned}
 \dot{\boldsymbol{\alpha}}_1 & = \mathbf{v} \times \pp{h}{\mathbf{v}}
 \quad \Rightarrow \quad  \boldsymbol{\alpha}_1= 
 \mathbf{v} \times \pp{h}{\mathbf{v}}t + \boldsymbol{\alpha}_1(0) 
 \\ 
 \dot{\boldsymbol{\beta}}_1 & = \pp{h}{\mathbf{v}}
 \quad \Rightarrow \quad  \boldsymbol{\beta}_1 = \pp{h}{\mathbf{v}}t +  \boldsymbol{\beta}_1(0) 
 \\ 
  \dot{\boldsymbol{\alpha}}_2 & = \mathbf{0} 
 \quad \Rightarrow \quad  \boldsymbol{\alpha}_2 = \boldsymbol{\alpha}_2(0) 
 \\ 
 \dot{\boldsymbol{\beta}}_2 & = \mathbb{Q}^\mathsf{T} \pp{h}{\mathbf{v}}
 \quad \Rightarrow \quad  \boldsymbol{\beta}_2 = \mathbb{Q}^\mathsf{T} \pp{h}{\mathbf{v}} t + \boldsymbol{\beta}_2(0) 
 \\ 
\dot{\mathbb{Q}} &  = 0  \quad 
\Rightarrow \quad \mathbb{Q} = \mathbb{Q}(0) 
\\
\dot{\mathbf{v}} & = \mathbf{0} \quad 
\Rightarrow \quad \mathbf{v}= \mathbf{v}(0) \,.
\end{aligned}
\right. 
\label{SE3_equations_step6}
\end{equation}
{\bf Practical implementation:} We take three sub-steps defined by some functions $V_i$, $i=1,2,3$. Then, we choose
\[ 
h_{6,i}=V_i(L_i \mathbf{e}_i \cdot \mathbf{v}_i)= 
V_i(L_i v_i) \, , \quad \text{with} \quad   
\pp{h_{6,i}}{\mathbf{v}} =  \left( 
M_i \sigma(L_i v_i) + N_i 
\right)\mathbf{e}_i \, , 
\]
where $L_i $, $M_i$ and $N_i$ are some constants and $\sigma(x)$ is the activation function. 
\end{enumerate}

Note that steps 5 and 6 above can be combined into a single step $h = h (\mathbb{Q}, \mathbf{v})$ since the equations for these Hamiltonians are explicitly solvable. 

\subsection{Results of application of CLPNets algorithm for the whole space learning}

The results of the simulations are presented in Figure~\ref{fig:SE3_results}, showing the angular and linear momenta $(\balpha_i,\bbeta_i)$, $i=1,2$, which exhibit reasonable agreement. The learning was conducted with 1000 data pairs, coming from $20$ trajectories of length $51$ data points each, with initial conditions randomly positioned in phase space. Specifically, the initial conditions for the trajectory data pairs $(\balpha_i,\bbeta_i)$ are chosen randomly within the cube $|\balpha_i| \leq 2$, $|\bbeta_i| \leq 2$; the initial rotation for the data pair $\mathbb{Q}$ is a rotation about a random axis with an angle randomly selected in the interval $|\phi_i| \leq \pi/2$, and the initial conditions for $\mathbf{v}$ are also chosen randomly within the cube $|\mathbf{v}| \leq 0.5 \pi$. All random distributions are uniform. The ground truth solutions are computed using a high-precision BDF algorithm for $\Delta t=h=0.1$. 

The CLPNets algorithm is comprised of transformations of types 1-6 described above. The sequence of transformations performed is $1 \rightarrow 2 \rightarrow 3 \rightarrow \ldots \rightarrow 6$. This sequence is repeated three times, resulting in a total of 189 parameters. The Adam algorithm is used for optimization, resulting in a reduction of the loss, which is taken to be the Mean Square Error (MSE), from about $0.02$ to $7\cdot10^{-5}$ in 2000 epochs for the example shown here. The initial learning rate is chosen to be $1$, decreasing exponentially to $0.1$ by the end of the learning procedure. The values of all parameters of the neural network are initialized randomly with a uniform distribution in the range $(-0.1, 0.1)$.

After the learning is completed, we perform simulations starting with a random initial point, which we have chosen at random in a cube $|\balpha_i| \leq 2$, $|\bbeta_i| \leq 1$; the initial rotation for the data pair $\mathbb{Q}$ is again taken to be a rotation about a random axis with an angle randomly chosen in the interval $|\phi_i| \leq \pi/4$, and initial conditions for $\mathbf{v}$ are chosen randomly from the cube $|\mathbf{v}| \leq 0.5$. Thus, this particular trajectory was never observed by the system during learning, and our algorithm is capable of learning the dynamics in the whole phase space.

\begin{remark}[On resisting the curse of dimensionality by CLPNets]
{\rm We remind the reader that the phase space for the coupled \( SE(3) \) system considered here is 18-dimensional, with each momentum \((\alpha_i, \beta_i)\) belonging to a linear space of dimension 6, and the \( SE(3) \) group elements \((Q, v)\) also described by 6 parameters. It is thus even more surprising that the dynamics of the whole phase space can be described by such a low number of data points. Thus, one can say that our method resists the curse of dimensionality. We conjecture that the method can be highly computationally efficient for higher-dimensional problems, such as those arising from the discretization of realistic elasticity models.}
 \end{remark}
 \begin{figure}
    \centering
    \includegraphics[width=0.8\textwidth]{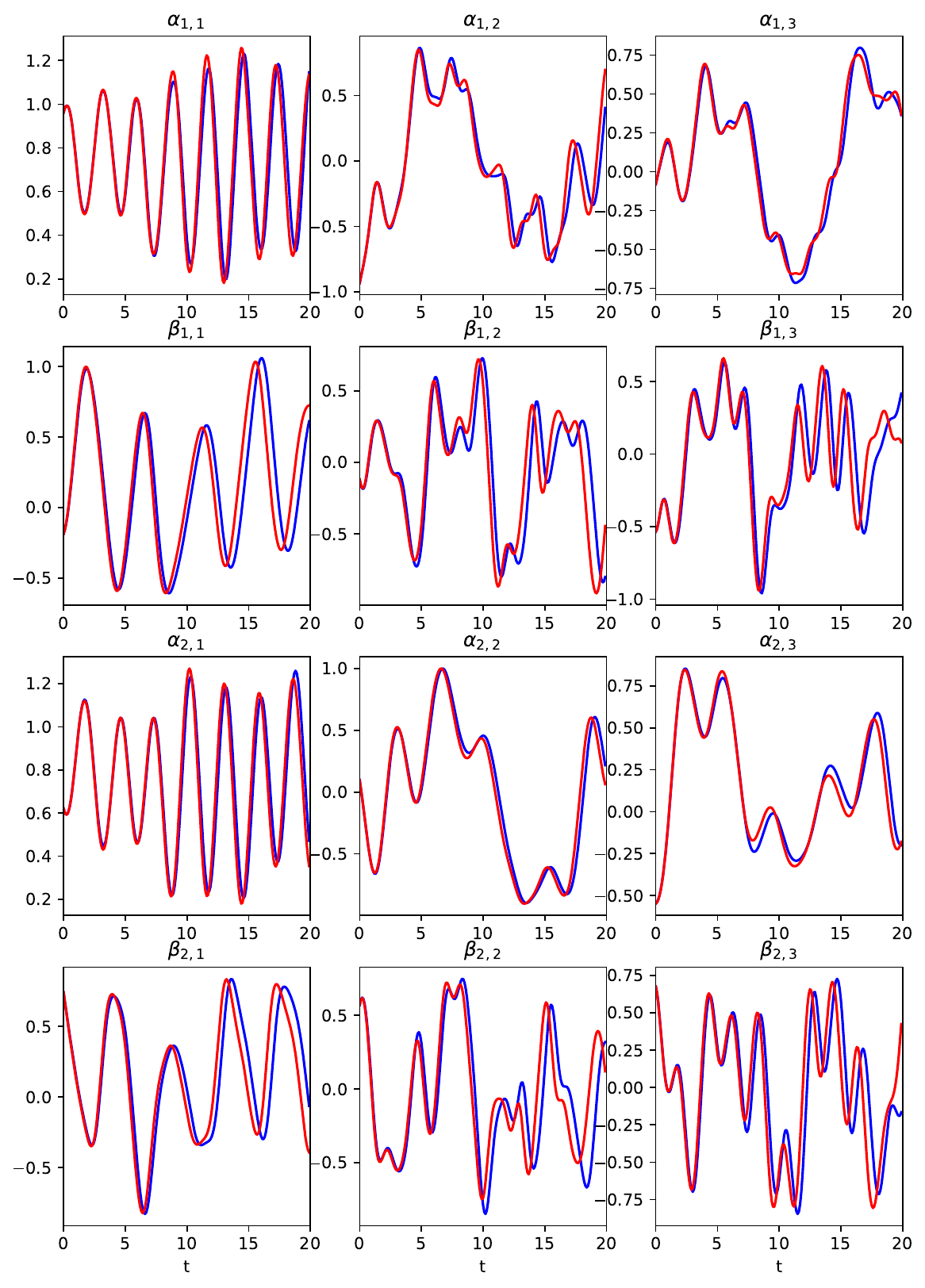}
    \caption{Prediction for a solution of \eqref{SE3_equations_explicit}, with 1000 data pairs used for learning. All components of momenta $(\boldsymbol{\alpha}_{1,2}, \boldsymbol{\beta}_{1,2})$ are shown; the comparison of the group components $(\mathbb{Q},\mathbf{v})$ is not shown for brevity. Again, the blue lines represent the ground truth whereas the red lines represent the predictions provided by CLPNets. }
    \label{fig:SE3_results}
\end{figure}
While we do not observe a perfect match like we did for some simpler integrable systems, we note that since the system is chaotic, a perfect match should not be expected. On Figure~\ref{fig:Casimir_Energy_Error_SE3}, 
we show the results of the conservation of the Hamiltonian and the Casimir (left panel) and the error growth vs Lyapunov's exponent (right panel). The error growth matches the results of Lyapunov's exponent, which is the best result one can hope to achieve for the prediction of a chaotic system. 

While the results on Figure~\ref{fig:SE3_results}   and Figure~\ref{fig:Casimir_Energy_Error_SE3}  do not show perfect matching expected for more simple systems, we remind the reader that the learning was done on an 18-dimensional manifold using only 189 parameters and 1000 data points. Thus, we find the application of CLPNets to the coupled $SE(3)$ system to be quite effective. 

On the top part of the figure, we present the conservation of two Casimirs for the system. They are obtained from the general class of Casimirs given in Lemma from \ref{Lemma_Casimir} following 
\citep{krishnaprasad1987hamiltonian}: if $C_ \mathfrak{g} ( \mu )$ is a Casimir of the motion on just the Lie algebra part of the dynamics, then $C( \mu _1, \mu _2, p)=C_ \mathfrak{g} ( \mu _1 + \operatorname{Ad}^*_{p ^{-1} } \mu _2)$ is a Casimir for the extended part of the dynamics. In our case, if $C_ \mathfrak{g} ( \boldsymbol{\alpha }, \boldsymbol{ \beta })$ is a Casimir of the $ \mathfrak{g} =\mse(3)$-dynamics, then 
$C( \boldsymbol{\alpha }_1, \boldsymbol{ \beta }_1, \boldsymbol{\alpha }_2, \boldsymbol{ \beta }_2, \mathbb{Q}, \mathbf{v} )=C_ \mathfrak{g} \big(( \boldsymbol{\alpha }_1, \boldsymbol{ \beta }_1) + \operatorname{Ad^*}_{( \mathbb{Q}, \mathbf{v} )^{-1}} ( \boldsymbol{\alpha }_2, \boldsymbol{ \beta }_2)\big)$ will be the corresponding Casimir for the full dynamics. 

\subsection{Casimirs} 
The regular Lie-Poisson motion on $SE(3)$ has two Casimirs, as is well established in the corresponding theory related to Kirchhoff's equations modeling underwater vehicles \citep{LeMa1997stability,holmes1998dynamics}. In terms of the angular $\balpha$ and linear $\bbeta$ momenta, these two Casimirs are written as 
 $C_1 = \balpha \cdot \bbeta$ and $C_2 = | \bbeta|^2$. 
The general formula \eqref{Casimir_coupled} yields two Casimirs for the coupled $SE(3)$ bodies:  
\begin{equation}\label{Casimir_extended_SE3}
\begin{array}{l}
C_1  = \overline{\balpha} \cdot \overline{\bbeta} 
\\
C_2  = \big| \overline{\bbeta}\big|^2
\end{array}
\qquad \mbox{where} \qquad 
\begin{array}{l}
 \overline{\balpha}   :=  \balpha_1 +\mathbb{Q} \balpha_2 + \mathbf{v} \times  \mathbb{Q} \bbeta_2  
\\
\overline{\bbeta}  := \bbeta_1 + \mathbb{Q} \bbeta_2 \,.
\end{array}
\end{equation}
These Casimirs are plotted in the top panels of Figure~\ref{fig:Casimir_Energy_Error_SE3}. Once again, the Casimirs in our scheme are conserved with machine precision, whereas the Casimirs provided by the ground truth are only conserved up to $10^{-8}$ accuracy. In the bottom panels, we plot the energy conservation (left panel) and the growth of error (right panel). We see that the energy is conserved by our scheme with less precision than the ground truth, as expected, although the energy in this particular case is conserved quite well on average. Additionally, the growth of MAE in all components is consistent with the expected best case given by the Lyapunov exponent. The exact nature of conservation and growth of errors in components and energy depends on the specific point chosen in the phase space, but the Casimirs in our case are always conserved with machine precision, much better than the ground truth solutions. 
\begin{figure}
    \centering
    \includegraphics[width=1 \textwidth]{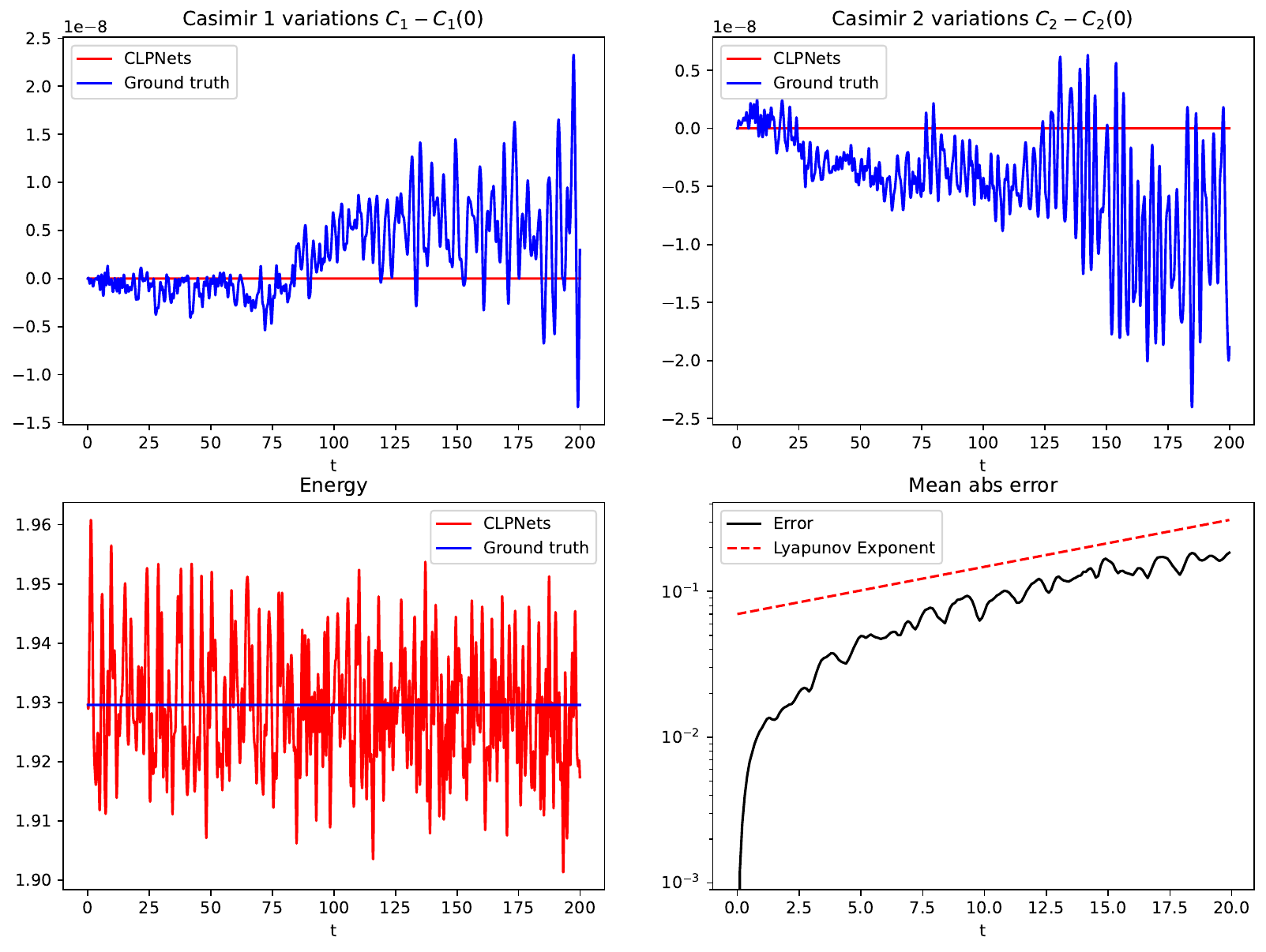}
    \caption{Top: Conservation of the Casimirs. Left/Right panel: Casimirs $C_1$ and $C_2$ as defined by \eqref{Casimir_extended_SE3}. Bottom left panel: conservation of energy by the system. Bottom right panel: Growth of Mean Absolute Error (MAE) in components vs $e^{\lambda t}$, where $\lambda$ is the first Lyapunov exponent. }
    \label{fig:Casimir_Energy_Error_SE3}
\end{figure}

\rem{ 

} 

\section{Conclusions and future work}

In this paper, we have derived a novel algorithm -- CLPNets -- that is capable of learning the whole space dynamics (more precisely, the dynamics of a large portion of the phase space) for systems containing several interacting parts. The learning procedure involves constructing a network from a sequence of transformations with parameters. These transformations, derived from specific functional forms of test Hamiltonians and taken in sequence, faithfully reproduce the structure of dynamics in the phase space.
We demonstrate that learning the dynamics in phase space requires only a modest number of data points,  and the network includes a relatively small number of parameters, even when the system's total dimension is high. The efficient operation of the CLPNets network enables future expansion of this method to compute highly efficient, structure-preserving, data-based solutions to challenging problems, such as those in nonlinear elasticity and continuum mechanics. In particular, in follow-up work, we will study a model of an elastic rod, which consists of a chain of coupled $SE(3)$ elements as described in Section~\ref{sec_SE3}. It will also be interesting to see if alternative data-based, structure-preserving methods suggested recently \citep{vaquero2023symmetry,vaquero2024designing} can offer similar performance for the higher-dimensional coupled systems considered here.

\section{Acknowledgements}
We are grateful to Anthony Bloch, Pavel Bochev, Eric Cyr, David Martin de Diego, Sofiia Huraka, Justin Owen, Brad Theilman, and Dmitry Zenkov for fruitful and engaging discussions. FGB was partially supported by a start-up grant from the Nanyang
Technological University. VP was partially supported by the NSERC Discovery grant. 

This article has been co-authored by an employee of National Technology \& Engineering Solutions of Sandia, LLC under Contract No. DE-NA0003525 with the U.S. Department of Energy (DOE). The employee owns right, title, and interest in and to the article and is responsible for its contents. The United States Government retains and the publisher, by accepting the article for publication, acknowledges that the United States Government retains a non-exclusive, paid-up, irrevocable, world-wide license to publish or reproduce the published form of this article or allow others to do so, for United States Government purposes. The DOE will provide public access to these results of federally sponsored research in accordance with the DOE Public Access Plan https://www.energy.gov/downloads/doe-public-access-plan.

\bibliography{References_LPNets}
\appendix 
\section{Mathematical background on the dynamics of coupled bodies}
\label{app_sec_math_coupled}
\subsection{Notations for systems on Lie groups}\label{notations}

We quickly explain here the notations used 
throughout the paper for the treatment and derivation of the equations of motion for systems on Lie groups.


Let us denote by $ \operatorname{AD}_g:G \rightarrow G$, $ \operatorname{AD}_gh= gh g ^{-1}$ the action by conjugation. The adjoint action is $ \operatorname{Ad}_g: \mathfrak{g} \rightarrow \mathfrak{g} $, defined by $\operatorname{Ad}_g \xi = \left. \frac{d}{d\varepsilon}\right|_{\varepsilon=0} \operatorname{AD}_gc_ \varepsilon$ with $c_ \varepsilon \in G$ a curve tangent to $ \xi \in \mathfrak{g} $ at $e$ and $ \mathfrak{g} $ the Lie algebra of $G$. The Lie bracket is found as $[ \xi , \eta  ]= \operatorname{ad}_ \xi \eta =  \left. \frac{d}{d\varepsilon}\right|_{\varepsilon=0} \operatorname{Ad}_{c_ \varepsilon } \eta$, for $ \xi , \eta \in \mathfrak{g} $,
with $c_ \varepsilon \in G$ the same curve as before. Given a duality pairing $ \left\langle \cdot , \cdot \right\rangle : \mathfrak{g} ^* \times \mathfrak{g} \rightarrow \mathbb{R}  $ between $\mathfrak{g} $ and its dual $ \mathfrak{g} ^* $, the coadjoint action is $ \operatorname{Ad}^*_g: \mathfrak{g} ^* \rightarrow \mathfrak{g} ^*$, defined by $ \langle \operatorname{Ad}^*_g \mu , \xi \rangle = \langle \mu , \operatorname{Ad}_g \xi \rangle$, for all $ \xi \in \mathfrak{g} $ and $ \mu \in \mathfrak{g} ^*$. Then, the coadjoint operator appearing in the equations \eqref{variations_eqs_gen}, \eqref{eqs_two_groups}, \eqref{hamiltonian_eqs_two_groups} is defined by
\[
\operatorname{ad}^*_ \xi: \mathfrak{g} ^* \rightarrow \mathfrak{g} ^* , \quad \langle \operatorname{ad}^*_ \xi \mu , \eta \rangle= \langle \mu , \operatorname{ad}_ \xi \eta \rangle
\]
and satisfies $\operatorname{ad}^*_ \xi \mu= \left. \frac{d}{d\varepsilon}\right|_{\varepsilon=0} \operatorname{Ad}^*_{c_ \varepsilon } \mu$.

Given a Lagrangian $\ell$ as in \eqref{red_Lagr_2}, its partial derivatives $\frac{\partial \ell}{\partial \omega _i} \in \mathfrak{g} ^*$ and $ \frac{\partial \ell}{\partial p} \in T^*_pG$ are defined by
\[
\left\langle \frac{\partial \ell}{\partial \omega _1}, \delta \omega _1 \right\rangle = \left. \frac{d}{d\varepsilon}\right|_{\varepsilon=0} \ell( \omega _1+ \varepsilon \delta \omega _1, \omega _2,p)\,, \quad \left\langle \frac{\partial \ell}{\partial p}, \delta p \right\rangle = \left. \frac{d}{d\varepsilon}\right|_{\varepsilon=0} \ell( \omega _1, \omega _2, p_  \varepsilon )  \,,
\]
with $p_ \varepsilon \in G$ a curve with $ \left. \frac{d}{d\varepsilon}\right|_{\varepsilon=0} p_ \varepsilon = \delta p \in T_pG$ and where the duality pairing is between the tangent and cotangent spaces $T_pG$ and $T^*_pG$ at $p \in G$. The expression $p ^{-1} \frac{\partial \ell}{\partial p} \in \mathfrak{g} ^* $ appearing in equations \eqref{variations_eqs_gen} and \eqref{eqs_two_groups} is defined by
\begin{equation}\label{def_p} 
\left\langle p ^{-1} \frac{\partial \ell}{\partial p}, \xi \right\rangle := \left\langle \frac{\partial \ell}{\partial p}, p \xi \right\rangle \,  , \quad \text{for all $ \xi \in \mathfrak{g} $}\,,
\end{equation} 
with $p \xi \in T_pG$ defined by $p \xi := \left. \frac{d}{d\varepsilon}\right|_{\varepsilon=0} p c_ \varepsilon $ with $c_ \varepsilon \in G$ a curve with $c_0=e$ and $ \left. \frac{d}{d\varepsilon}\right|_{\varepsilon=0} c_ \varepsilon = \xi$. The partial derivatives $ \frac{\partial h}{\partial \mu _i} \in \mathfrak{g}$ and $ \frac{\partial h}{\partial p} \in T^*_pG$ appearing in \eqref{hamiltonian_eqs_two_groups} are defined similarly.

\subsection{Poisson brackets, Casimirs functions, and symplectic leaves}\label{Appendix_A2}

We recall that Poisson brackets on a manifod $P$ are bilinear skewsymmetric maps $\{ \cdot , \cdot \}: C^\infty(P) \times C^\infty(P) \rightarrow C^\infty(P)$ satisfying the Leibniz rule $\{fg,h\}= \{f,h\}g+ f\{g,h\}$ and the Jacobi identity $\{f,\{g,h\}\}+\{g,\{h,f\}\}+\{h,\{f,g\}\}=0$, for all $f,g,h \in C^\infty(P)$. The fact that the bracket \eqref{reduced_Poisson} for coupled systems defines a Poisson bracket on $P= \mathfrak{g} ^* \times \mathfrak{g} ^* \times G$ follows from 
Poisson reduction by symmetry of the canonical Poisson bracket on $T^*(G \times G)$. More precisely, the quotient map
\[
\pi : T^*( G \times G) \rightarrow T^*(G \times G)/G \simeq \mathfrak{g} ^* \times \mathfrak{g} ^* \times G\,,
\]
\[
\pi ( \alpha _{g_1}, \alpha  _{ g_2})= (g_1 ^{-1} \alpha _{g_1}, g_2 ^{-1} \alpha _{g_2}, g_1 ^{-1} g_2)= ( \mu _1, \mu _2, p)
\]
associated to the left action of $G$ on $T^*( G \times G)$, relates the bracket \eqref{reduced_Poisson} to the canonical Poisson bracket $\{ \cdot , \cdot \}_{\rm can}$ on $T^*(G \times G)$ as follows:
\[
\{ f \circ \pi , g \circ \pi \}_{\rm can}= \{f,g\} \circ \pi \,, \quad \text{for all $f,g$}.
\]
From this and general results in Poisson reduction, it immediately follows that \eqref{reduced_Poisson} is a Poisson bracket, see \cite{MaRa2013}.

Due to its non-canonical nature, this bracket admits \textit{Casimir functions}, i.e. functions $C: \mathfrak{g} ^* \times \mathfrak{g} ^* \times G \rightarrow \mathbb{R} $ which satisfy
\[
\{f,C\}=0\, , \quad  \text{for all functions $f$}.
\]

A class of Casimir functions is given in the following Lemma, in which we also consider the Lie-Poisson bracket on $ \mathfrak{g} ^* $ given by
\begin{equation}\label{LP_bracket}
\{F,G\}_{\rm LP}( \mu )= - \left\langle \mu , \left[ \frac{\partial F}{\partial \mu },  \frac{\partial G}{\partial \mu } \right] \right\rangle 
\end{equation} 
for $F,G \in C^\infty( \mathfrak{g} ^* )$.
Recall that this bracket also emerges by Poisson reduction, in the simpler situation of the quotient map
\[
\pi: T^*G \rightarrow T^*G/G\simeq \mathfrak{g} ^* \,, \quad \pi ( \alpha _g)= g ^{-1} \alpha _g =\mu 
\]
associated to the left action of $G$ on $T^*G$.

\begin{lemma}\label{Lemma_Casimir} Let $C_ \mathfrak{g} : \mathfrak{g} ^* \rightarrow \mathbb{R} $ be a Casimir function of the Lie-Poisson bracket on $ \mathfrak{g} ^* $, \emph{i.e.}, 
\[
\{F, C_ \mathfrak{g} \}_{\rm LP}=0\, , \quad  \text{for all functions $F: \mathfrak{g} ^* \rightarrow \mathbb{R}$}.
\]
Then the function $C: \mathfrak{g} ^* \times \mathfrak{g} ^* \times G \rightarrow \mathbb{R} $ defined by
\begin{equation}\label{Casimir_coupled} 
C( \mu _1, \mu _2, p)= C_ \mathfrak{g} ( \mu _1 + \operatorname{Ad}^*_ {p ^{-1} } \mu _2) 
\end{equation} 
is a Casimir function for the Poisson bracket \eqref{reduced_Poisson}.
\end{lemma}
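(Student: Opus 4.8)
The plan is to verify $\{f,C\}=0$ for an arbitrary test function $f$ directly from the expression \eqref{reduced_Poisson}, after computing the three partial derivatives of $C$. Writing $\nu:=\mu_1+\operatorname{Ad}^*_{p^{-1}}\mu_2$ and $D:=\frac{\partial C_\mathfrak{g}}{\partial\nu}(\nu)\in\mathfrak{g}$, the chain rule gives at once $\frac{\partial C}{\partial\mu_1}=D$ and $\frac{\partial C}{\partial\mu_2}=\operatorname{Ad}_{p^{-1}}D$. The only delicate derivative is $\frac{\partial C}{\partial p}$, since $p$ enters $C$ only through the coadjoint map $\operatorname{Ad}^*_{p^{-1}}$. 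First I would differentiate $C$ along a left-translated curve $p_\varepsilon=p\,c_\varepsilon$ and along a right-translated curve $p_\varepsilon=c_\varepsilon\,p$ separately, using $\operatorname{Ad}^*_{gh}=\operatorname{Ad}^*_h\operatorname{Ad}^*_g$ together with $\left.\frac{d}{d\varepsilon}\right|_{0}\operatorname{Ad}^*_{c_\varepsilon^{-1}}=-\operatorname{ad}^*_\xi$ (the minus coming from inverting the curve), to produce exactly the two translated cotangent objects that occur in \eqref{reduced_Poisson}:
\[
p^{-1}\frac{\partial C}{\partial p}=\operatorname{ad}^*_{\operatorname{Ad}_{p^{-1}}D}\,\mu_2,\qquad
\frac{\partial C}{\partial p}\,p^{-1}=\operatorname{ad}^*_{D}\,\operatorname{Ad}^*_{p^{-1}}\mu_2.
\]

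Next I would isolate the single fact that makes everything collapse: the hypothesis that $C_\mathfrak{g}$ is a Lie–Poisson Casimir on $\mathfrak{g}^*$, namely $\{F,C_\mathfrak{g}\}_{\rm LP}=0$ for all $F$, is equivalent (by the definition \eqref{LP_bracket} and the antisymmetry of $[\cdot,\cdot]$) to the pointwise identity
\[
\operatorname{ad}^*_{D}\,\nu=\operatorname{ad}^*_{D}\big(\mu_1+\operatorname{Ad}^*_{p^{-1}}\mu_2\big)=0,
\]
evaluated at the argument $\nu$. This is the only place where the hypothesis on $C_\mathfrak{g}$ will be used.

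Then I would substitute the four derivatives into \eqref{reduced_Poisson} and group the six resulting terms into three pairs. The pair coming from $\frac{\partial f}{\partial\mu_1}$ (the first and third terms of \eqref{reduced_Poisson}), after moving $\operatorname{ad}$ across the pairing via $\langle\mu,[\zeta,D]\rangle=-\langle\operatorname{ad}^*_{D}\mu,\zeta\rangle$, combines into $\langle\operatorname{ad}^*_{D}\nu,\frac{\partial f}{\partial\mu_1}\rangle$, which vanishes by the Casimir identity above. The pair coming from $\frac{\partial f}{\partial\mu_2}$ (the second and fourth terms) cancels by pure antisymmetry of $\operatorname{ad}$ and the definition of $\operatorname{ad}^*$, with no appeal to the hypothesis. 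The remaining pair (the fifth and sixth terms), in which $\frac{\partial f}{\partial p}$ is paired against $D$ and against $\operatorname{Ad}_{p^{-1}}D$, cancels once both contributions are rewritten on $T^*_pG$ using the definition \eqref{def_p} of the translated cotangent vectors together with the elementary identity $D\,p=p\,(\operatorname{Ad}_{p^{-1}}D)$ relating the right and left translations of a Lie-algebra element into $T_pG$.

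I expect the main obstacle to be the careful bookkeeping in computing $\frac{\partial C}{\partial p}$: one must track on which side $p$ is translated, the sign produced by differentiating $\operatorname{Ad}^*_{p_\varepsilon^{-1}}$ through the inverse, and the distinction between the two objects $p^{-1}\frac{\partial C}{\partial p}$ and $\frac{\partial C}{\partial p}\,p^{-1}$ appearing in the bracket. Once those are pinned down, the three cancellations are short, and since $f$ was arbitrary we conclude $\{f,C\}=0$, so $C$ is a Casimir of \eqref{reduced_Poisson}.
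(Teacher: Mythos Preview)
Your proposal is correct and follows essentially the same route as the paper: compute the three partial derivatives of $C$ via the chain rule and the variation of $\operatorname{Ad}^*_{p^{-1}}$, substitute into \eqref{reduced_Poisson}, and use the Casimir identity $\operatorname{ad}^*_{D}\nu=0$ to collapse everything. The only cosmetic difference is in the grouping: the paper invokes the Casimir identity already when computing $\frac{\partial C}{\partial p}\,p^{-1}$, rewriting it as $-\operatorname{ad}^*_{D}\mu_1$, so that their ``Term~1 + Term~3'' pair cancels by pure antisymmetry and the Casimir identity is used again on the $\frac{\partial f}{\partial\mu_2}$ pair; you instead keep $\frac{\partial C}{\partial p}\,p^{-1}=\operatorname{ad}^*_{D}\operatorname{Ad}^*_{p^{-1}}\mu_2$ and spend the Casimir identity on the $\frac{\partial f}{\partial\mu_1}$ pair, with the $\frac{\partial f}{\partial\mu_2}$ pair then cancelling by antisymmetry alone. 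Your treatment of the $\frac{\partial f}{\partial p}$ terms (fifth and sixth in \eqref{reduced_Poisson}) via $D\,p=p\,\operatorname{Ad}_{p^{-1}}D$ is explicit, whereas the paper's displayed computation suppresses that pair; both arguments are complete.
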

\begin{proof} As a Casimir for \eqref{LP_bracket}, the function $C_ \mathfrak{g} ( \nu )$ satisfies
\begin{equation}\label{Casimir_prop} 
\operatorname{ad}^*_{ \frac{ \partial C_ \mathfrak{g}}{ \partial \nu  } } \nu =0.
\end{equation} 
The partial derivatives of $C$ are
\[
\frac{\partial C}{\partial \mu _1}= \frac{\partial C_ \mathfrak{g} }{\partial \nu}, \quad  \frac{\partial C}{\partial \mu _2}= \operatorname{Ad}_{p ^{-1} } \frac{\partial C_ \mathfrak{g} }{\partial \nu} , \quad  \frac{\partial C }{\partial p} p ^{-1}= \operatorname{ad}^*_{\frac{\partial C_ \mathfrak{g} }{\partial  \nu }} \operatorname{Ad}^*_{p ^{-1} } \mu  _2  = -\operatorname{ad}^*_{\frac{\partial C_ \mathfrak{g} }{\partial  \nu }} \mu  _1\,,
\]
where we used \eqref{Casimir_prop} in the last equality, as well as the formulas
\[
\delta (\operatorname{Ad}_{ p ^{-1} } \xi )= [ \operatorname{Ad}_{p ^{-1} } \xi , p ^{-1} \delta p ], \quad \delta (\operatorname{Ad}^*_{ p ^{-1} } \mu )
= - \operatorname{ad}^*_{ \delta p p ^{-1} }  \operatorname{Ad}^*_{p ^{-1} } \mu  
\]
for the variatons with respect to $p$. Inserting these expressions in \eqref{reduced_Poisson}, we get
\begin{align*}
\{f,C\}=& - \left\langle \mu _1, \left[ \frac{\partial f}{\partial \mu _1}, \frac{\partial C_ \mathfrak{g} }{\partial \nu }  \right] \right\rangle - \left\langle \mu _2, \left[ \frac{\partial f}{\partial \mu _2}, \operatorname{Ad}_{p ^{-1} } \frac{\partial C_ \mathfrak{g} }{\partial \nu }  \right] \right\rangle\\
&- \left\langle \frac{\partial f}{\partial \mu _1}, \operatorname{ad}^*_{ \frac{\partial C_ \mathfrak{g} }{\partial \nu } } \mu _1 \right\rangle  + \left\langle \frac{\partial f}{\partial \mu _2}, \operatorname{Ad}^*_p  \operatorname{ad}^*_{ \frac{\partial C_ \mathfrak{g} }{\partial \nu } } \mu _1 \right\rangle\\
=& - \left\langle \mu _1 + \operatorname{Ad}^*_{p ^{-1} } \mu _2, \left[ \operatorname{Ad}_p  \frac{\partial f}{\partial \mu _2}, \frac{\partial C_ \mathfrak{g} }{\partial \nu } \right] \right\rangle   =0\, ,
\end{align*}
where we used \eqref{Casimir_prop} in the last equality, with $ \nu = \mu _1+ \operatorname{Ad}^*_{p ^{-1} } \mu _2$.
\end{proof}

\medskip 

The class of Casimir functions given in \eqref{Casimir_coupled} is a particular case of the class considered in \citep{krishnaprasad1987hamiltonian} for coupled systems.

\medskip 

Finally, from a general result on Poisson structures, it follows that the phase space $ \mathfrak{g} ^* \times \mathfrak{g} ^* \times G$ of the Poisson bracket \eqref{reduced_Poisson} is foliated into symplectic leaves, on which the flow of the Hamiltonian system \eqref{hamiltonian_eqs_two_groups} restricts and becomes symplectic, \cite{MaRa2013}. More precisely, given a symplectic leaf $ \mathcal{L} \subset \mathfrak{g} ^* \times \mathfrak{g} ^* \times  G$, there exists a unique symplectic form $ \omega _ \mathcal{L} $ on $ \mathcal{L} $ such that the Poisson bracket, when restricted to $ \mathcal{L} $, becomes associated to the symplectic form $ \omega _ \mathcal{L} $:
\[
\{f,g\}|_ \mathcal{L} = \omega _ \mathcal{L} \big(X_{f|_ \mathcal{L} }, X_{g|_ \mathcal{L} }\big),
\]
where $X_{f|_ \mathcal{L} }$ is the Hamiltonian vector field associated to the function $f|_ \mathcal{L} : \mathcal{L} \rightarrow \mathbb{R} $ on the symplectic manifold $( \mathcal{L} , \omega _ \mathcal{L} )$, i.e. $ {\rm d} (f|_ \mathcal{L})= {\rm i} _{X_{f|_ \mathcal{L} }} \omega _ \mathcal{L} $. Note that Casimir functions are constant on symplectic leaves.

As we have shown, the CLPNets algorithms preserve the symplectic leaves and hence the Casimir functions.

\section{Derivation of the equations of motion for two coupled rigid bodies}
\label{app_derivation_SO3}

\paragraph{Notations and preliminaries} The Lie group to be considered in this case is $G=SO(3)$ with Lie algebra $ \mathfrak{so}(3)$ given by $3 \times 3$ antisymmetric matrices. 
We identify the dual $ \mathfrak{so}(3)^*$ with $\mathfrak{so}(3)$ by using the duality pairing $ \left\langle a , v \right\rangle = \frac{1}{2} \operatorname{Tr}(a^\mathsf{T} v)$, $a \in \mso(3)^*$, $v \in \mso(3)$.
We further identify $3 \times 3$ antisymmetric matrices $v$ with vectors $\mathbf{v}\in \mathbb{R} ^3$ by using the hat isomorphism $\;\;\widehat{  }\;: \mathbb{R} ^3  \rightarrow \mathfrak{so}(3)$, $ v= \widehat{ \mathbf{v}}$, defined by $ \mathbf{v} \times \mathbf{w} = v \mathbf{w}$, for all $ \mathbf{w} \in \mathbb{R} ^3  $. The inverse is written $\mathbf{v}= v^\vee$, satisfying $v^\vee \times \mathbf{w} = v \mathbf{w} $, for all $ \mathbf{w} \in \mathbb{R} ^3$. With this identification, we have
\begin{equation}\label{co_ad} 
\left\langle a,v \right\rangle = \mathbf{a} \cdot \mathbf{v}, \quad (\operatorname{ad}_ v u)^\vee= \mathbf{v} \times \mathbf{u} , \quad  \left( \operatorname{ad}^*_{v} a \right)^\vee = - \mathbf{v} \times \mathbf{a}\,.
\end{equation} 
The operator $\vee$ is extended from $ \mathfrak{so}(3)$ to arbitrary matrices $M$, by considering its skewsymmetric part as follows $M^\vee \times \mathbf{w}= \frac{1}{2} (M-M^\mathsf{T})\mathbf{w}$, for all $ \mathbf{w} \in \mathbb{R} ^3$. With this extension, we have the identity
\begin{equation}\label{identity}
(\mathbf{a} \otimes \mathbf{b})^\vee = \frac{1}{2} \left( \mathbf{a} \otimes \mathbf{b} - \mathbf{b} \otimes \mathbf{a} \right)^\vee  = - \frac{1}{2} (\mathbf{a} \times \mathbf{b})\,,
\end{equation} 
which will be used below.
The pairing between $v_p \in T_p SO(3)$ and $a_p \in T^*_pSO(3)$ is 
\begin{equation}\label{pairing_so3}
\langle a_p, v_p \rangle = \frac{1}{2}\operatorname{Tr}(a_p^\mathsf{T} v_p)=  \big(a_p p ^{-1} \big)^\vee \cdot \big(v_p p ^{-1} \big)^\vee
\end{equation}
and is left and right invariant. With this setting, by specifying the general equations \eqref{hamiltonian_eqs_two_groups} to the group or rotations, we get:
\begin{equation}
\left\{
\begin{array}{l} 
\displaystyle\vspace{0.2cm}\dot \bmu_1  =-\pp{h}{\bmu_1} \times \bmu_1 + \left( \frac{\partial h}{\partial p}p^\mathsf{T} \right)^\vee \\ 
\displaystyle\vspace{0.2cm}\dot \bmu_2  = -\pp{h}{\bmu_2} \times \bmu_2 - \left( p^\mathsf{T}\frac{\partial h}{\partial p} \right) ^\vee \\
\displaystyle\dot p  = - \pp{h}{\mu_1} p + p \pp{h}{\mu_2} \,.
\label{mu_1_mu2_F_gen_so3}
\end{array}\right. 
\end{equation}

\paragraph{Physical system and equations of motion} The physical setup of the problem is as follows. Let us consider two rigid bodies with their fixed point coinciding at $\mathbf{r}=0$. Each of the rigid bodies $1$ and $2$ has charges of value $q_{i,j}$, $j=1, \ldots,m$ with the coordinates $\bxi_{1,2}$ in each body frame. If the orientation of each rigid body is given by $\mathbb{R}_{1,2} \in SO(3)$, then the distance between the charges $i$ (body 1) and $j$ (body 2) are 
\begin{equation}
d_{ij}(p)=\left| \mathbb{R}_1 \bxi_{1,i} - \mathbb{R}_2 \bxi_{2,j} \right| = 
\left|  \bxi_{1,i} - p \bxi_{2,i} \right|\, , \quad p:=  \mathbb{R}_1^\mathsf{T}  \mathbb{R}_2 \, . 
    \label{d_ij}
\end{equation}
Assuming the tensors of inertia for each of the rigid body are $\mathbb{I}_i$, the physical Hamiltonian of this problem is $h: \mathfrak{so}(3) ^* \times \mathfrak{so}(3) ^* \times SO(3) \rightarrow \mathbb{R}$ given by
\begin{equation}
    h( \bmu_1,\bmu_2, p) = \frac{1}{2} \mathbb{I}_1^{-1}\bmu_1 \cdot \bmu_1 + \frac{1}{2} \mathbb{I}_2^{-2}\bmu_2 \cdot \bmu_2
    + \sum_{i<j} \frac{q_{1,i} q_{2,j}}{d_{ij}(p)}\,.
    \label{SO3_coupled}
\end{equation}
The computations of the derivatives proceed as follows: 
\begin{equation}
    \pp{h}{\bmu_i } = \mathbb{I}_i^{-1} \bmu_i \, , \quad 
    \pp{h}{p} = 2 \sum_{i<j} \frac{q_{1,i} q_{2,j}}{d_{ij}(p)^3} \left(  \bxi_{1,i} - p \bxi_{2,i} \right) \otimes \bxi_{2,i}\,,
    \label{h_derivs} 
\end{equation}
where we recall that we use the pairing \eqref{pairing_so3}. 
Then from \eqref{identity} and using  $(\bu \otimes \bv) p^\mathsf{T} =  \bu \otimes p \bv$, we get 
\begin{equation}
\begin{aligned} 
\left( \pp{h}{p} p^{-1} \right)^\vee & = -  \sum_{i<j} \frac{q_{1,i} q_{2,j}}{d_{ij}(p)^3}   \bxi_{1,i} \times p \bxi_{2,j}   
\\
\left( p^{-1} \pp{h}{p}  \right)^\vee  & =  -   \sum_{i<j} \frac{q_{1,i} q_{2,j}}{d_{ij}(p)^3}   p^\mathsf{T} \bxi_{1,i} \times  \bxi_{2,j} \,. 
\end{aligned}
    \label{dh_dp}
\end{equation}
Clearly, the force from charges acting on the first rigid body is the negative of the force acting on the second rigid body, multiplied by $p^\mathsf{T}$, as expected.  Substituting \eqref{h_derivs} and \eqref{dh_dp} into \eqref{mu_1_mu2_F_gen_so3}, we obtain the final equations of motion for two coupled rigid bodies \eqref{SO3_coupled_equations}. 

\section{Derivation of equations of motion for two coupled $SE(3)$ bodies}
\label{sec_derivation_SE3}

The coadjoint action is computed following the definition in \ref{notations}. For any two elements $g_1, g_2 \in SE(3)$, we have 
\begin{equation}
\operatorname{AD}_{g_1} g_2 = g_1 g_2 g_1 ^{-1} = 
\left( 
\begin{array}{cc} 
\mathbb{R}_1 \mathbb{R}_2 \mathbb{R}_1^\mathsf{T} & 
\mathbf{v}_1 + \mathbb{R}_1 \mathbf{v}_2 -
\mathbb{R}_1 \mathbb{R}_2 \mathbb{R}_1^\mathsf{T} 
\mathbf{v}_1 
\\
\mathbf{0}^\mathsf{T} & 1 
\end{array} 
\right) \, . 
    \label{AD_SE3}
\end{equation}
The $\operatorname{Ad}$-action is computed by considering a curve  $g_2 = g_2(t)$ at the identity and taking the derivative at $t=0$.  If we denote $\dot{g}_2(0) = (\widehat{\mathbf{a}_2},\mathbf{b}_2) \in \mse(3)$, then 
\begin{equation}
\begin{aligned} 
{\rm Ad}_{g_1=(\mathbb{R}_1, \mathbf{v}_1) } (\widehat{\mathbf{a}_2}, \mathbf{b}_2) & = 
\left( 
\begin{array}{cc} 
\mathbb{R}_1 \widehat{\mathbf{a}_2}\mathbb{R}_1^\mathsf{T} &  \mathbb{R}_1 \mathbf{b}_1 - \mathbb{R}_1 \widehat{\mathbf{a}_2} \mathbb{R}_1^\mathsf{T}\mathbf{v}_1 \\
    \mathbf{0}^\mathsf{T} & 0 
\end{array} 
\right)
\\
& := \left( \mathbb{R}_1 \widehat{\mathbf{a}_2} \mathbb{R}_1^\mathsf{T} ,  \mathbb{R}_1 \mathbf{b}_2 - \mathbb{R}_1 \widehat{\mathbf{a}_2} \mathbb{R}_1^\mathsf{T}\mathbf{v}_1\right) \, , 
    \end{aligned} 
    \label{Ad_SE3}
\end{equation}
where we have defined the element of the Lie algebra $\mse(3)$ by two components, collecting the meaningful non-zero terms in the $4 \times 4$ matrix. Note that since $\widehat{\mathbf{a}} \mathbf{v} = \mathbf{a} \times \mathbf{v}$, \eqref{Ad_SE3} can be written equivalently in vector form 
\begin{equation}
{\rm Ad}_{g_1=(\mathbb{R}_1, \mathbf{v}_1) } (\mathbf{a}_2,\mathbf{b}_2)  = 
\left( \mathbb{R}_1 \mathbf{a}_2  ,  \mathbb{R}_1 \mathbf{b}_2 - \mathbb{R}_1 \mathbf{a}_2  \times \mathbf{v}_1\right)\,.
    \label{Ad_SE3_vec}
\end{equation}

The ${\rm ad}$-action is then computed by additionally by considering $g_t(t)= (\mathbb{R}_1(t), \mathbf{v}_1(t)) $ and taking the derivative of that quantity at $t=0$. Again, with $\dot g_t(0)= (\widehat{a}_1(t), \mathbf{b}_1(t)) \in \mse(3)  $, we obtain, in the matrix and vector forms, respectively: 
\begin{equation}
\begin{aligned} 
{\rm ad}_{(\widehat{\mathbf{a}_1}, \mathbf{b}_1) } (\widehat{\mathbf{a}_2}, \mathbf{b}_2) & = 
\left( 
[ \widehat{\mathbf{a}_1}, \widehat{\mathbf{a}_2} ],  \widehat{\mathbf{a}_1} \mathbf{b}_2 - \widehat{\mathbf{a}_2} \mathbf{b}_1
\right)
\\
& := \left( 
\mathbf{a}_1 \times \mathbf{a}_2 ,  
\mathbf{a}_1 \times \mathbf{b}_2 - \mathbf{a}_2 \times  \mathbf{b}_1
\right) \,.
    \end{aligned} 
    \label{ad_SE3}
\end{equation}
Let us consider the pairing between an element $(\mathbf{a}, \mathbf{b}) \in \mse(3)$ and a dual element
$(\mathbf{A}, \mathbf{b}) \in \mse(3)^*$ defined by 
\begin{equation}
\left<  (\mathbf{a}, \mathbf{b}), 
(\mathbf{A}, \mathbf{B}) \right>  
= \frac{1}{2} \mbox{Tr} \big(\,\widehat{\mathbf{a}}\,^\mathsf{T}\, \widehat{\mathbf{A}} \,\big)  + \mathbf{b} \cdot \mathbf{B} = \mathbf{a} \cdot \mathbf{A} + \mathbf{b} \cdot \mathbf{B} \, . 
    \label{pairing_SE3}
\end{equation}
Then, with \eqref{pairing_SE3}, we can derive the expression for the coadjoint action ${\rm ad}^*$ as follows:
\begin{equation}
\begin{aligned}
 \big< {\rm ad}_{(\widehat{\mathbf{c}}, \mathbf{d}) } (\widehat{\mathbf{a}}, \mathbf{b}) \,   , 
(\widehat{ \mathbf{A}}, \mathbf{B})
\big>  &  = - \left( \mathbf{a} \times \mathbf{A} +\mathbf{b} \times \mathbf{B} \right) \cdot \mathbf{c} - 
\left(  \mathbf{a} \times \mathbf{B}\right) \cdot \mathbf{d} 
\\
& 
=\left< \left( 
-  \mathbf{a} \times \mathbf{A} -\mathbf{b} \times \mathbf{B}  \, , 
- \mathbf{a} \times \mathbf{B} \right) \, , 
( \mathbf{c}, \mathbf{d} ) 
\right> 
\\
& = 
\big< {\rm ad}^*_{(\mathbf{a},\mathbf{b})}(\mathbf{A}, \mathbf{B}), ( \mathbf{c}, \mathbf{d} ) \big> \,
\end{aligned} 
\end{equation} 
\begin{equation}\label{ad_star_SE3} \Rightarrow \quad {\rm ad}^*_{(\mathbf{a},\mathbf{b})}(\mathbf{A}, \mathbf{B})  = \left( - \left( \mathbf{a} \times \mathbf{A} +\mathbf{b} \times \mathbf{B} \right) \, , 
- \mathbf{a} \times \mathbf{B} \right)\,.
\end{equation}
We now compute the variations with respect to $p=g_1^{-1} g_2$ in order to derive the terms coming from the partial derivative $\pp{h}{p}$ in the equations of motion for $G=SE(3)$. Denote the variations as $\sigma _i = g_i^{-1} \de g_i = \big( \widehat{\boldsymbol{ \Sigma}}_{\boldsymbol{\alpha},i}, \boldsymbol{\Sigma}_{\boldsymbol{\beta},i} \big)$, where $ \widehat{\boldsymbol{\Sigma}}_{\boldsymbol{\alpha},i}$ and $\boldsymbol{\Sigma}_{\boldsymbol{\beta},i}$ are angular and linear momentum variations, respectively.  We proceed as follows: 
\begin{equation}
\begin{aligned} 
    \de \int_{t_0}^{t_f} h(p) \mbox{d} t & = 
    \int_{t_0}^{t_f} \left< \pp{h}{p} \, , \, \de p \right>_{SE(3)} {\rm d} t\\ 
    & = \int_{t_0}^{t_f} \left< \pp{h}{p}   \, , \, 
    \left( - \psi_1 p + p \psi_2  \right) \right>_{SE(3)} \mbox{d} t
    \\ 
    & = \int_{t_0}^{t_f} \left< \left( \pp{h}{\mathbb{Q}}, \pp{h}{\mathbf{v}} \right)  \, , \, 
    - \left( \widehat{\boldsymbol{\Sigma}}_{\boldsymbol{\alpha},1} \, , \,  \widehat{\boldsymbol{\Sigma}}_{\boldsymbol{\alpha},1} \mathbf{v} +  \boldsymbol{\Sigma}_{\boldsymbol{\beta},1} \right)  \right>_{SE(3)} \mbox{d} t
    \\ 
    & \qquad + \int_{t_0}^{t_f} \left< \left( \pp{h}{\mathbb{Q}}, \pp{h}{\mathbf{v}} \right)  \, , \, 
     \left( \mathbb{Q} \widehat{\boldsymbol{\Sigma}}_{\boldsymbol{\alpha},2} \, , \, \mathbb{Q} \boldsymbol{\Sigma}_{\boldsymbol{\beta},2}  \right)  \right>_{SE(3)} \mbox{d} t
         \\ 
    & = \int_{t_0}^{t_f} - \left(  \left( \pp{h}{\mathbb{Q}} \mathbb{Q}^\mathsf{T}\right)^\vee + \mathbf{v} \times \pp{h}{\mathbf{v}} \right) \cdot  
    \boldsymbol{\Sigma}_{\boldsymbol{\alpha},1}
    - 
    \pp{h}{\mathbf{v}} \cdot \boldsymbol{\Sigma}_{\boldsymbol{\beta},1} \mbox{d} t
    \\ 
    & \qquad +     \int_{t_0}^{t_f} \left(   \mathbb{Q}^\mathsf{T} \pp{h}{\mathbb{Q}} \right)^\vee  \cdot  
    \boldsymbol{\Sigma}_{\boldsymbol{\alpha},2}
    + 
    \mathbb{Q}^\mathsf{T} \pp{h}{\mathbf{v}} \cdot \boldsymbol{\Sigma}_{\boldsymbol{\beta},2} \mbox{d} t\,.
    \end{aligned} 
    \label{delta_h_var}
\end{equation}
The terms proportional to 
$ \boldsymbol{\Sigma}_{\boldsymbol{\alpha},i}$ 
yield the terms in the equation for $\dot{\boldsymbol{\alpha}}_i$, while those proportional to 
$ \boldsymbol{\Sigma}_{\boldsymbol{\beta},i}$ 
contribute to the equation for $\dot{\boldsymbol{\beta}}_i$. 

By using the expressions \eqref{ad_star_SE3} and \eqref{delta_h_var} in the general equations \eqref{hamiltonian_eqs_two_groups}, we obtain the equations given by \eqref{SE3_equations_explicit} for the angular and linear momenta 
$(\boldsymbol{\alpha}_i, \boldsymbol{\beta}_i) \in \mse(3) $ and relative orientation and translation $p = (\mathbb{Q}, \mathbf{v}) \in SE(3)$. 
\end{document}